\theoremstyle{plain}
\newtheorem{theorem}{Theorem}[section]
\newtheorem{proposition}[theorem]{Proposition}
\newtheorem{lemma}[theorem]{Lemma}
\newtheorem{corollary}[theorem]{Corollary}
\theoremstyle{definition}
\newtheorem{definition}[theorem]{Definition}
\theoremstyle{remark}
\icmltitlerunning{Spectral GNN via Two-dimensional (2-D) Graph Convolution}
\begin{document}

\twocolumn[
\icmltitle{Spectral GNN via Two-dimensional (2-D) Graph Convolution}




\begin{icmlauthorlist}
\icmlauthor{Guoming Li}{idp,mbz}
\icmlauthor{Jian Yang}{casia}
\icmlauthor{Shangsong Liang}{mbz}
\icmlauthor{Dongsheng Luo}{fiu}
\end{icmlauthorlist}

\icmlaffiliation{idp}{Independent Researcher}
\icmlaffiliation{mbz}{MBZUAI}
\icmlaffiliation{fiu}{Florida International University}
\icmlaffiliation{casia}{Institute of Automation, Chinese Academy of Sciences}

\icmlcorrespondingauthor{Guoming Li}{paskardli@outlook.com}

\icmlkeywords{Machine Learning, ICML}

\vskip 0.3in
]



\printAffiliationsAndNotice 

\begin{abstract}
Spectral Graph Neural Networks (GNNs) have achieved tremendous success in graph learning. 
As an essential part of spectral GNNs, spectral graph convolution extracts crucial frequency information in graph data, leading to superior performance of spectral GNNs in downstream tasks. 
However, in this paper, we show that existing spectral GNNs remain critical drawbacks in performing the spectral graph convolution. 
Specifically, considering the spectral graph convolution as a construction operation towards target output, we prove that existing popular convolution paradigms cannot construct the target output with mild conditions on input graph signals, causing spectral GNNs to fall into suboptimal solutions. 
To address the issues, we rethink the spectral graph convolution from a more general two-dimensional (2-D) signal convolution perspective and propose a new convolution paradigm, named 2-D graph convolution. 
We prove that 2-D graph convolution unifies existing graph convolution paradigms, and is capable to construct arbitrary target output.
Based on the proposed 2-D graph convolution, we further propose ChebNet2D, an efficient and effective GNN implementation of 2-D graph convolution through applying Chebyshev interpolation. 
Extensive experiments on benchmark datasets demonstrate both effectiveness and efficiency of the ChebNet2D. 
\end{abstract}


\section{Introduction}
\label{section-introduction}

Graph Neural Networks (GNNs)~\cite{comprehensivegnn} have emerged as a powerful tool in machine learning, adept at handling structured data represented as graphs. 
Among these, spectral GNNs, which process graph data as signals on graph (namely graph signals) in the frequency domain, have shown remarkable performance in various graph learning tasks~\cite{spectralGNN-1,spectralGNN-2,spectralGNN-3,spectralGNN-4,spectralGNN-5,spectralGNN-6}. 
These methods leverage the principles of Graph Signal Processing (GSP)~\cite{GraphSignalProcessingOverviewChallengesandApplications} to perform the key spectral graph convolution~\cite{graphconv_1} to handle the graph signals, significantly enhancing learning outcomes. 
However, a notable gap exists in this spectral graph convolution, the core component of spectral GNNs. 
In the conventional GSP domain, the spectral graph convolution is designed on vector-type graph signals~\cite{graphconv_2,GraphSignalProcessingforMachineLearningAReviewandNewPerspectives}. 
In contrast, practical graph learning tasks often deal with matrix-type signals, such as node feature matrices~\cite{representationlearningongraph,comprehensivegnn,deepgraphlearningsurvey,surveygraphstructurelearning}. 
This discrepancy leads to varying convolution paradigms in existing spectral GNNs that have been implemented (discussed in Section~\ref{section-notations-and-preliminaries-spectral-graph-neural-networks}), which further raises a critical question: 
\par \textit{Are those existing spectral graph convolution paradigms appropriate for practical matrix-type graph signals?} 
\par Motivated by this question, in this paper, we revisit the popular convolution paradigms of existing spectral GNNs and show the critical issues of those paradigms. 
Specifically, considering the spectral graph convolution as construction operation to the target output, we prove that existing convolution paradigms cannot construct the target output when mild conditions on input graph signals hold, which further causes spectral GNNs to fall into suboptimal solutions. 
\par To address the critical issues of existing convolution paradigms, we rethink the spectral graph convolution from a more general two-dimensional (2-D) signal convolution perspective, and propose a new convolution paradigm, named 2-D graph convolution. 
Theoretically, we prove that the proposed 2-D graph convolution unifies the popular spectral graph convolution paradigms, and is always capable of constructing arbitrary target output. 
Moreover, we prove that the parameter number of 2-D graph convolution is irreducible for constructing arbitrary target output, which examines the parameter efficiency of our 2-D graph convolution. 
\par Further, we propose ChebNet2D, a novel spectral GNN applying the 2-D graph convolution. 
ChebNet2D implements the convolution operators in 2-D graph convolution with Chebyshev interpolation, leading to a noticeably streamlined model architecture with marginal complexity increase against other spectral GNNs. 
By conducting node classification experiments on $18$ datasets, we show that ChebNet2D outperforms SOTA methods, verifying the superiority of 2-D graph convolution against other convolution paradigms. 


\section{Notations and Preliminaries}
\label{section-notations-and-preliminaries}
In this paper, we use the boldface lowercase letters for \textit{vectors}, e.g., $\boldsymbol{f}$, where $\boldsymbol{f}_{i}$ denotes the $i$-th element of $\boldsymbol{f}$. 
\textit{Matrices} are denoted with boldface uppercase letters, e.g., $\boldsymbol{F}$, where $\boldsymbol{F}_{i:}$ and $\boldsymbol{F}_{:j}$ denote the $i$-th row and $j$-column of $\boldsymbol{F}$ respectively, and $\boldsymbol{F}_{ij}$ denotes the element of $\boldsymbol{F}$ on $i$-th row, $j$-column. 
Let $\mathcal{G}=(\mathcal{V}, \mathcal{E}, \boldsymbol{X})$ denote an undirected graph $\mathcal{G}$ with a finite node set $\mathcal{V}=\left\{v_{1},v_{2},...,v_{N} \right\}$, an edge set $\mathcal{E}\subseteq\mathcal{V}\times\mathcal{V}$, and a node feature matrix $\boldsymbol{X}\in\mathbb{R}^{N\times K}$, where $K$ is dimension of node features. 
Let $\boldsymbol{A}\in\left\{0,1\right\}^{N\times N}$ be the unweighted adjacency matrix of $\mathcal{G}$ and $\boldsymbol{D}$ be the diagonal matrix whose diagonal element $\boldsymbol{D}_{ii}$ is the degree of node $v_{i}$. 
Let $\boldsymbol{L}=\boldsymbol{I}-\boldsymbol{D}^{-\frac{1}{2}}\boldsymbol{A}\boldsymbol{D}^{-\frac{1}{2}}$ be the normalized graph Laplacian matrix of $\mathcal{G}$, where $\boldsymbol{I}$ is the identity matrix. 
Let $\boldsymbol{L}=\boldsymbol{U}diag(\boldsymbol{\lambda})\boldsymbol{U}^{T}$ be the eigendecomposition on $\boldsymbol{L}$, where $\boldsymbol{U}\in\mathbb{R}^{N\times N}$, $\boldsymbol{\lambda}\in\mathbb{R}^{N}$ be eigenvectors and eigenvalues of $\boldsymbol{L}$, and $diag(\boldsymbol{\lambda})$ is the diagonal matrix of $\boldsymbol{\lambda}$. 

\subsection{Spectral Graph Convolution in Graph Signal Processing}
\label{section-notations-and-preliminaries-spectral-graph-convolution}
\text{Graph signal processing (GSP)} is developed to analyze and process the challenging \textit{graph signals}, i.e., the signals generated on graphs and accompanied with complicated topologies~\cite{GraphSignalProcessingOverviewChallengesandApplications,GraphSignalProcessingforMachineLearningAReviewandNewPerspectives}. 
As a fundamental technique in GSP, \textit{spectral graph convolution}~\cite{graphconv_1,graphconv_2} processes graph signals by performing convolution operation (also called filtering operation) on them, which is defined based on \textit{graph Fourier transform (GFT)} from spectral graph theory~\cite{spectralgraphtheory}. 
Specifically, given an $N$-dimensional graph signal vector $\boldsymbol{f}\in\mathbb{R}^{N}$ on graph $\mathcal{G}$, the GFT of $\boldsymbol{f}$ and its inverse transform are formulated as below:
\begin{align}
\label{equation-GFT-inverseGFT}
&\Hat{\boldsymbol{f}}=\boldsymbol{U}^{T}\boldsymbol{f}\ ,\\
&\boldsymbol{f}=\boldsymbol{U}\Hat{\boldsymbol{f}}\ , 
\end{align}
where $\Hat{\boldsymbol{f}}$ denotes the GFT of $\boldsymbol{f}$, and $\boldsymbol{U}$ is eigenvectors of graph Laplacian. 
The GFT transforms graph signals from the original vertex domain into an alternative \textit{frequency domain}, whilst providing a novel perspective in analyzing graph signals. 
Based on the GFT, the spectral graph convolution can be thereby defined: given graph signal $\boldsymbol{f}\in\mathbb{R}^{N}$ on graph $\mathcal{G}$ and $\boldsymbol{g}\in\mathbb{R}^{N}$ called \textit{spectral graph filter}, the spectral graph convolution of $\boldsymbol{f}$ with filter $\boldsymbol{g}$ is formulated as~\cite{graphconv_1,graphconv_2}
\begin{align}
\label{equation-graphconv-1d}
\boldsymbol{f}\ast_{\mathcal{G}}\boldsymbol{g}=\boldsymbol{U}diag(\boldsymbol{g})\boldsymbol{U}^{T}\boldsymbol{f}\doteq\boldsymbol{\Phi}_{\mathcal{G}}\boldsymbol{f}\ , 
\end{align}
where $\boldsymbol{\Phi}_{\mathcal{G}}=\boldsymbol{U}diag(\boldsymbol{g})\boldsymbol{U}^{T}$ is called \textit{convolution operator} associated with filter $\boldsymbol{g}$ on graph $\mathcal{G}$.

\subsection{Spectral Graph Neural Networks via Spectral Graph Convolution}
\label{section-notations-and-preliminaries-spectral-graph-neural-networks}
As an important branch of GNNs, spectral GNNs process the graph signals through conducting spectral graph convolution operation as Eq.~\ref{equation-graphconv-1d}. 
Notably, the practical graph signals in graph learning tasks are generally in matrix forms (called node feature)~\cite{representationlearningongraph,comprehensivegnn,deepgraphlearningsurvey,surveygraphstructurelearning}, while conventional spectral graph convolution is defined on vector-type signals (see Eq.~\ref{equation-graphconv-1d}). 
This gap leads to diverse spectral graph convolution paradigms proposed in previous spectral GNNs. 
Specifically, let $\boldsymbol{F}\in\mathbb{R}^{N\times C}$, $\boldsymbol{Z}\in\mathbb{R}^{N\times C}$ be input graph signal matrix and convolution output, there are three popular convolution paradigms in previous spectral GNNs: 
\begin{itemize}[leftmargin=*,parsep=2pt,itemsep=2pt,topsep=2pt]
\item {\bf Paradigm (I):} 
\begin{equation}
\label{equation-paradigm-single-filter-and-no-feature-interaction}
\boldsymbol{Z}=\boldsymbol{\Phi}_{\mathcal{G}}\boldsymbol{F}\ ,
\end{equation}
where $\boldsymbol{\Phi}_{\mathcal{G}}$ is convolution operator. 
As the most widely-used paradigm of spectral GNNs, Paradigm {\bf (I)} has achieved great success on graph learning tasks~\cite{APPNP,SSGC,GNN-HF-LF,GPRGNN,BernNet-GNN-narrowbandresults-1,ChebNetII,OptBasisGNN}. 
\item {\bf Paradigm (II):} 
\begin{align}
\label{equation-paradigm-single-filter-and-feature-interaction}
\boldsymbol{Z}=\boldsymbol{\Phi}_{\mathcal{G}}\boldsymbol{F}\boldsymbol{R}\ , 
\end{align}
where $\boldsymbol{R}\in\mathbb{R}^{C\times C}$ denotes linear transformation on $\boldsymbol{F}$. 
This linear transformation $\boldsymbol{R}$ models column-wise correlation of the graph signals, leading to better performance of spectral GNNs~\cite{2dGFT-gnn-1,SGC,Adagnn--spectral-2dgnn,DSGC-spectral-2dgnn-2dGFT-gnn-2,specGN}. 
\item {\bf Paradigm (III):} 
\begin{align}
\label{equation-paradigm-individual-filter-and-no-feature-interaction}
\boldsymbol{Z}_{:c}=\boldsymbol{\Phi}_{\mathcal{G}}^{(c)}\boldsymbol{F}_{:c}\ ,\quad c=1,2,...,C\ ,
\end{align}
where $\boldsymbol{\Phi}_{\mathcal{G}}^{(c)}=\boldsymbol{U}diag(\boldsymbol{g}^{(c)})\boldsymbol{U}^{T}$, $c=1,2,...,C$, are individual convolution operators on each column of $\boldsymbol{F}$. 
By applying diverse operators to performing individual filtering on each column of $\boldsymbol{F}$, spectral GNNs as Paradigm {\bf (III)} have achieved promising results~\cite{GWNN,ADC,JacobiConv}. 
\end{itemize}
To match our analysis with majority of research on spectral GNNs, we focus on the mainstream model architecture of spectral GNNs that decouples feature transformation and spectral graph convolution, which means $\boldsymbol{F}$ is obtained by conducting dimension reduction on node feature $\boldsymbol{X}$~\cite{APPNP,SSGC,GNN-HF-LF,GPRGNN,BernNet-GNN-narrowbandresults-1,EvenNet,JacobiConv,ChebNetII,NFGNN,robusempGCN,OptBasisGNN}. 

\section{Two-dimensional (2-D) Graph Convolution}
\label{section-2dgraphconv}
In this section, we first analyze Paradigm {\bf (I)}, {\bf (II)}, {\bf (III)} from the perspective of target construction and prove that target output can \textbf{never} be constructed by those paradigms in certain cases. 
After that, we rethink the target construction problem from the perspective of 2-D signal convolution and propose our 2-D graph convolution. 
With rigorous theoretical analysis, we prove that 2-D graph convolution is a generalized paradigm of Paradigm {\bf (I)}, {\bf (II)} and {\bf (III)}, and can construct arbitrary output with $0$ error. 

\subsection{Analyzing Convolution Paradigms in the Lens of Target Output Construction}
\label{section-2dgraphconv-motivation-analyzing-existing-paradigm}
By decomposing feature transformation and convolution operation, we restate the spectral graph convolution as a construction task to the target output with graph convolution operation. 
Formally, let $\boldsymbol{F}$ be graph signal obtained with feature transformation on $\boldsymbol{X}$ and $\mathcal{T}_{\boldsymbol{\Psi}}$ with parameter $\boldsymbol{\Psi}$ be function that performs graph convolution, the goal of spectral graph convolution is to minimize the error as below: 
\begin{align}
\label{equation-target-output-construction}
||\mathcal{T}_{\boldsymbol{\psi}}(\boldsymbol{F})-\boldsymbol{Z}^{*}||_{F}\ , 
\end{align}
where $\boldsymbol{Z}^{*}$ denotes the target output, and $||\cdot||_{F}$ is Frobenius norm~\cite{linearalgebra}. 
Similar learning goal is introduced in previous works on spectral GNNs~\cite{JacobiConv,OptBasisGNN}, while they focus on achieving minimal rather than $0$ error with strong assumption on $\boldsymbol{Z}^{*}$, i.e., $\boldsymbol{Z}^{*}$ can be constructed with existing convolution paradigms. 
In contrast, our problem statement aims to achieve a more challenging setting with $0$ error to arbitrary $\boldsymbol{Z}^{*}$. 
\par Based on this problem statement, we analyze the capability of Paradigm {\bf (I)}, {\bf (II)} and {\bf (III)} from the perspective of target output construction, and obtain the following theorem: 
\begin{theorem}
\label{theorem-existing-paradigms-fails-to-construct-optimal-output}
Let $\mathcal{T}_{\boldsymbol{\Psi}}$ be the parameterized function that performs graph convolution as Paradigm {\bf (I)} or {\bf (II)} or {\bf (III)} on graph signal matrix $\boldsymbol{F}\in\mathbb{R}^{N\times C}$, where $\boldsymbol{\Psi}$ denotes the parameters. 
For the $\boldsymbol{F}$ which is nontrivial on frequency, i.e., no zero row exists in $\boldsymbol{U}^{T}\boldsymbol{F}$, the convolution output $\mathcal{T}_{\boldsymbol{\Psi}}(\boldsymbol{F})$ fails to construct the target output $\boldsymbol{Z}^{*}$ with certain conditions on $\boldsymbol{F}$, $\boldsymbol{Z}^{*}$. 
That is, in certain cases, the construction error $||\mathcal{T}_{\boldsymbol{\Psi}}(\boldsymbol{F})-\boldsymbol{Z}^{*}||_{F}$ cannot achieve $0$ for all parameter values of $\boldsymbol{\Psi}$. 
\end{theorem}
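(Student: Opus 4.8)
The plan is to move everything into the frequency domain and, for each paradigm, describe exactly which outputs are reachable as the parameters range over all values; then for each paradigm I would produce a frequency-nontrivial $\boldsymbol{F}$ together with a target $\boldsymbol{Z}^{*}$ that provably lies outside the reachable set. Since $\boldsymbol{U}$ is orthogonal, $\|\boldsymbol{M}\|_{F}=\|\boldsymbol{U}^{T}\boldsymbol{M}\|_{F}$ for every matrix $\boldsymbol{M}$, so writing $\widehat{\cdot}=\boldsymbol{U}^{T}(\cdot)$ we have $\|\mathcal{T}_{\boldsymbol{\Psi}}(\boldsymbol{F})-\boldsymbol{Z}^{*}\|_{F}=\|\widehat{\mathcal{T}_{\boldsymbol{\Psi}}(\boldsymbol{F})}-\widehat{\boldsymbol{Z}^{*}}\|_{F}$; hence attaining $0$ error for $\boldsymbol{Z}^{*}$ is the same as exactly representing $\widehat{\boldsymbol{Z}^{*}}$ in the frequency domain. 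Using $\boldsymbol{U}diag(\boldsymbol{g})\boldsymbol{U}^{T}\boldsymbol{F}=\boldsymbol{U}\big(diag(\boldsymbol{g})\widehat{\boldsymbol{F}}\big)$ (and similarly for the other two), the three paradigms become, after applying $\boldsymbol{U}^{T}$, the row-wise / entry-wise relations $\widehat{\boldsymbol{Z}}_{i:}=\boldsymbol{g}_{i}\widehat{\boldsymbol{F}}_{i:}$ for Paradigm~\textbf{(I)}, $\widehat{\boldsymbol{Z}}_{i:}=\boldsymbol{g}_{i}\widehat{\boldsymbol{F}}_{i:}\boldsymbol{R}$ for Paradigm~\textbf{(II)}, and $\widehat{\boldsymbol{Z}}_{ic}=\boldsymbol{g}^{(c)}_{i}\widehat{\boldsymbol{F}}_{ic}$ for Paradigm~\textbf{(III)}. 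The hypothesis that $\boldsymbol{F}$ is nontrivial on frequency is exactly that every row $\widehat{\boldsymbol{F}}_{i:}$ is nonzero.

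Next I would read off the obstruction for each paradigm. For \textbf{(I)} the reachable set is the set of $\widehat{\boldsymbol{Z}}$ each of whose rows is a scalar multiple of the corresponding row of $\widehat{\boldsymbol{F}}$; when $C\ge 2$, since $\widehat{\boldsymbol{F}}_{1:}\neq\boldsymbol{0}$ there is some $\boldsymbol{w}\in\mathbb{R}^{C}$ not proportional to it, and any $\boldsymbol{Z}^{*}$ with $\widehat{\boldsymbol{Z}^{*}}_{1:}=\boldsymbol{w}$ is then unreachable for every $\boldsymbol{g}$. For \textbf{(II)} I would use the rank bound $\mathrm{rank}(\boldsymbol{\Phi}_{\mathcal{G}}\boldsymbol{F}\boldsymbol{R})\le\mathrm{rank}(\boldsymbol{F})=\mathrm{rank}(\widehat{\boldsymbol{F}})$, valid for all $\boldsymbol{g},\boldsymbol{R}$: choosing $\widehat{\boldsymbol{F}}=\boldsymbol{a}\boldsymbol{b}^{T}$ with every entry of $\boldsymbol{a}\in\mathbb{R}^{N}$ and $\boldsymbol{b}\in\mathbb{R}^{C}$ nonzero makes $\boldsymbol{F}=\boldsymbol{U}\boldsymbol{a}\boldsymbol{b}^{T}$ frequency-nontrivial yet of rank $1$, so with $N,C\ge 2$ any rank-$2$ target $\boldsymbol{Z}^{*}$ is unreachable. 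For \textbf{(III)}, if $\widehat{\boldsymbol{F}}_{ic}=0$ then $\widehat{\boldsymbol{Z}}_{ic}=0$ is forced for every $\boldsymbol{g}^{(c)}$; taking $C\ge 2$ and an $\widehat{\boldsymbol{F}}$ with $\widehat{\boldsymbol{F}}_{11}=0$ but $\widehat{\boldsymbol{F}}_{12}\neq 0$ and all remaining entries nonzero keeps $\boldsymbol{F}$ frequency-nontrivial, and any $\boldsymbol{Z}^{*}$ with $\widehat{\boldsymbol{Z}^{*}}_{11}\neq0$ is unreachable. In all three cases $\|\mathcal{T}_{\boldsymbol{\Psi}}(\boldsymbol{F})-\boldsymbol{Z}^{*}\|_{F}>0$ for every value of $\boldsymbol{\Psi}$, which is the assertion of the theorem (with the understood side conditions $N,C\ge2$, and for \textbf{(II)} / \textbf{(III)} the indicated rank / zero-pattern condition on $\boldsymbol{F}$).

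I expect Paradigm~\textbf{(II)} to be the main obstacle, because its reachable set is a triple product of a free diagonal, a fixed matrix and a free $C\times C$ matrix, so one cannot simply argue row by row as in \textbf{(I)}; the decisive point is that left-multiplication by $\boldsymbol{\Phi}_{\mathcal{G}}$ and right-multiplication by $\boldsymbol{R}$ can never raise the rank above that of $\boldsymbol{F}$, together with the observation that frequency-nontriviality is fully compatible with rank deficiency via a nowhere-vanishing outer product. A secondary subtlety worth noting is that the statement genuinely requires $C\ge2$: at $C=1$, Paradigms~\textbf{(I)} and \textbf{(III)} reduce to the classical vector convolution $\widehat{\boldsymbol{Z}}_{i}=\boldsymbol{g}_{i}\widehat{\boldsymbol{F}}_{i}$, which reaches every target whenever $\boldsymbol{F}$ is frequency-nontrivial, so the ``certain conditions'' must include the matrix-signal regime, which is precisely the practically relevant one motivating the paper.
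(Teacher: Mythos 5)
Your proof is correct and follows the same basic route as the paper's own argument: pass to the frequency domain via the orthogonal $\boldsymbol{U}$ (so that $0$ error is equivalent to exact representation of $\widehat{\boldsymbol{Z}}^{*}$), then exhibit a per-paradigm obstruction --- a rank bound for Paradigm \textbf{(II)} and a forced zero at positions where $\widehat{\boldsymbol{F}}$ vanishes for Paradigm \textbf{(III)}. Two differences are worth recording. First, for Paradigm \textbf{(I)} the paper only extracts two necessary conditions from the relation $\widehat{\boldsymbol{Z}}^{*}_{nc}=\boldsymbol{g}_{n}\widehat{\boldsymbol{F}}_{nc}$ (the rank inequality and the zero-pattern condition), whereas you characterize the reachable set exactly --- each row of $\widehat{\boldsymbol{Z}}$ must be a scalar multiple of the corresponding row of $\widehat{\boldsymbol{F}}$ --- and obtain unreachability from non-proportionality of a single row; this is strictly sharper, since it also rules out targets that satisfy both of the paper's stated conditions (e.g.\ a full-rank, nowhere-zero $\widehat{\boldsymbol{Z}}^{*}$ whose first row is simply not parallel to $\widehat{\boldsymbol{F}}_{1:}$). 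Second, the paper states its failure conditions abstractly ($\mathrm{rank}(\boldsymbol{Z}^{*})>\mathrm{rank}(\boldsymbol{F})$, or $\Pr(\widehat{\boldsymbol{Z}}^{*}_{nc}\neq 0\mid\widehat{\boldsymbol{F}}_{nc}=0)>0$) without verifying that they can be realized by an $\boldsymbol{F}$ with no zero row in $\boldsymbol{U}^{T}\boldsymbol{F}$, which is the standing hypothesis of the theorem; your nowhere-vanishing outer product $\widehat{\boldsymbol{F}}=\boldsymbol{a}\boldsymbol{b}^{T}$ for \textbf{(II)} and the witness with $\widehat{\boldsymbol{F}}_{11}=0$, $\widehat{\boldsymbol{F}}_{12}\neq 0$ for \textbf{(III)} close that gap explicitly, and your observation that $C\ge 2$ is genuinely required (the theorem fails for vector signals under Paradigms \textbf{(I)} and \textbf{(III)}) is accurate and absent from the paper.
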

We prove this theorem in Appendix~\ref{appendix-proofs-and-derivations-theorem-existing-paradigms-fails-to-construct-optimal-output}. 
Theorem~\ref{theorem-existing-paradigms-fails-to-construct-optimal-output} demonstrates that existing three paradigms contain issues in constructing the target output when certain conditions on $\boldsymbol{F}$ and $\boldsymbol{Z}^{*}$ hold. 
Moreover, note that $\boldsymbol{F}$ is obtained with MLP or linear transformation on $\boldsymbol{X}$, and the target $\boldsymbol{Z}^{*}$ is unseen towards the learning process. 
Thus, both $\boldsymbol{F}$ and $\boldsymbol{Z}^{*}$ contain substantial uncertainties on practical properties, causing unexpected risk of falling into a suboptimal solution to the model with Paradigm {\bf (I)} or {\bf (II)} or {\bf (III)}. 
In particular, we further achieve a corollary as follows: 
\begin{corollary}
\label{corollary-simple-sum-still-failures}
Let $\mathcal{T}^{(1)}$, $\mathcal{T}^{(2)}$ and $\mathcal{T}^{(3)}$ be the parameterized functions performing spectral graph convolution as Paradigm {\bf (I)}, {\bf (II)} and {\bf (III)}, respectively. 
For the spectral graph convolution function constructed as $(\mathcal{T}^{(i)}+\mathcal{T}^{(j)})$, $i,j\in\left\{1,2,3\right\}$, there are certain cases of $\boldsymbol{F}$, $\boldsymbol{Z}^{*}$, making the convolution output $(\mathcal{T}^{(i)}+\mathcal{T}^{(j)})(\boldsymbol{F})$ fails to construct the target output $\boldsymbol{Z}^{*}$ for all parameter values. 
\par Especially, for the convolution function constructed as $(\mathcal{T}^{(1)}+\mathcal{T}^{(2)}+\mathcal{T}^{(3)})$, the cases leading to failure on target output construction still exists. 
\end{corollary}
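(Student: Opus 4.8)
The plan is to pass to the graph-Fourier domain and then exhibit a single instance $(\boldsymbol{F},\boldsymbol{Z}^{*})$ that simultaneously defeats every combination listed in the statement. Write $\widehat{\boldsymbol{F}}=\boldsymbol{U}^{T}\boldsymbol{F}$ and $\widehat{\boldsymbol{Z}}^{*}=\boldsymbol{U}^{T}\boldsymbol{Z}^{*}$; since $\boldsymbol{U}$ is orthogonal, $||\mathcal{T}(\boldsymbol{F})-\boldsymbol{Z}^{*}||_{F}=||\boldsymbol{U}^{T}\mathcal{T}(\boldsymbol{F})-\widehat{\boldsymbol{Z}}^{*}||_{F}$, so it suffices to block a perfect match in the frequency domain. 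Directly from the definitions of the three paradigms, the $i$-th row of $\boldsymbol{U}^{T}\mathcal{T}^{(1)}(\boldsymbol{F})$ is $\alpha_{i}\widehat{\boldsymbol{F}}_{i:}$, the $i$-th row of $\boldsymbol{U}^{T}\mathcal{T}^{(2)}(\boldsymbol{F})$ is $\beta_{i}\widehat{\boldsymbol{F}}_{i:}\boldsymbol{R}$ with one $\boldsymbol{R}\in\mathbb{R}^{C\times C}$ shared by all rows, and the $(i,c)$ entry of $\boldsymbol{U}^{T}\mathcal{T}^{(3)}(\boldsymbol{F})$ is $\gamma_{ic}\widehat{\boldsymbol{F}}_{ic}$, where $\alpha_{i},\beta_{i},\gamma_{ic}$ range over $\mathbb{R}$ and $\boldsymbol{R}$ over $\mathbb{R}^{C\times C}$. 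Summing operators just adds these row-wise expressions, so the set of attainable output rows is an explicitly described family of vectors in $\mathbb{R}^{C}$, and I only need a target whose rows lie outside it for every parameter value.

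For the construction I would take $C\ge 3$ and $N\ge 3$, fix three distinct node indices $p,q,r$, and choose $\widehat{\boldsymbol{F}}$ so that rows $p,q,r$ all equal $(0,\dots,0,1)$ (zeros in columns $1,\dots,C-1$, a $1$ in column $C$), with all other rows nonzero; then $\boldsymbol{F}:=\boldsymbol{U}\widehat{\boldsymbol{F}}$ is a real signal that is nontrivial on frequency. For the target I would set $\widehat{\boldsymbol{Z}}^{*}_{p:}=(1,0,\dots,0)$, $\widehat{\boldsymbol{Z}}^{*}_{q:}=(0,1,0,\dots,0)$, $\widehat{\boldsymbol{Z}}^{*}_{r:}=(0,0,1,0,\dots,0)$, leaving the remaining rows arbitrary. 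The two properties that drive the argument are: (a) on columns $1,\dots,C-1$ the input rows $p,q,r$ vanish, so the Paradigm~\textbf{(I)} and Paradigm~\textbf{(III)} contributions to those entries are identically $0$ regardless of parameters; and (b) the three target rows are distinct standard basis vectors, hence span a $3$-dimensional subspace of $\mathbb{R}^{C}$.

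The verification is a short case check. For $\mathcal{T}^{(1)}+\mathcal{T}^{(3)}$, property (a) forces entry $(p,1)$ of the output to $0$, while the target there is $1$. For $\mathcal{T}^{(2)}+\mathcal{T}^{(3)}$ and for $\mathcal{T}^{(1)}+\mathcal{T}^{(2)}+\mathcal{T}^{(3)}$, property (a) leaves, on columns $1$ and $2$ of rows $p$ and $q$, only the Paradigm~\textbf{(II)} term; because $\widehat{\boldsymbol{F}}_{p:}\boldsymbol{R}=\widehat{\boldsymbol{F}}_{q:}\boldsymbol{R}=\boldsymbol{R}_{C:}$, the pairs $(\widehat{\boldsymbol{Z}}_{p1},\widehat{\boldsymbol{Z}}_{p2})$ and $(\widehat{\boldsymbol{Z}}_{q1},\widehat{\boldsymbol{Z}}_{q2})$ would both have to be scalar multiples of the one vector $(\boldsymbol{R}_{C1},\boldsymbol{R}_{C2})$, which is impossible since the targets $(1,0)$ and $(0,1)$ are not parallel. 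For $\mathcal{T}^{(1)}+\mathcal{T}^{(2)}$, each of rows $p,q,r$ of the output lies in $\mathrm{span}\{(0,\dots,0,1),\boldsymbol{R}_{C:}\}$, the same subspace of dimension at most $2$, contradicting (b); the identical argument, now with two independent transformation matrices, handles the degenerate case $\mathcal{T}^{(2)}+\mathcal{T}^{(2)}$, while $\mathcal{T}^{(1)}+\mathcal{T}^{(1)}$ and $\mathcal{T}^{(3)}+\mathcal{T}^{(3)}$ collapse to a single paradigm and are already covered by Theorem~\ref{theorem-existing-paradigms-fails-to-construct-optimal-output}. Hence in every case the construction error stays strictly positive for all parameter values.

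The step I expect to be the genuine obstacle, as opposed to routine bookkeeping, is pinning down the attainable-set description for the sums involving Paradigm~\textbf{(III)}: that paradigm supplies $C$ independent parameters per row and alone could match any single target row, so the argument must isolate precisely the columns on which $\widehat{\boldsymbol{F}}$ vanishes (where that per-column freedom disappears) and then show that the surviving Paradigm~\textbf{(I)}/\textbf{(II)} freedom, with $\boldsymbol{R}$ forced to be common across rows, still cannot reach the target on those columns. Designing $(\boldsymbol{F},\boldsymbol{Z}^{*})$ so this obstruction fires on several rows at once is the crux; the rest is the elementary linear-algebra bookkeeping above.
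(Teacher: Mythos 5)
Your proof is correct, and it takes a genuinely different route from the paper's. The paper first reduces the problem to three representative combinations --- $(\mathcal{T}^{(2)}+\mathcal{T}^{(2)})$, $(\mathcal{T}^{(3)}+\mathcal{T}^{(3)})$, and $(\mathcal{T}^{(1)}+\mathcal{T}^{(2)}+\mathcal{T}^{(3)})$ --- via a specialization argument, and then treats each with separate machinery: for $(\mathcal{T}^{(2)}+\mathcal{T}^{(2)})$ it rewrites the construction equation as a generalized Sylvester equation and invokes Roth's solvability criterion to extract the rank obstruction $\mathrm{rank}(\hat{\boldsymbol{Z}}^{*})>2\,\mathrm{rank}(\hat{\boldsymbol{F}})$; for the triple sum it proves a dedicated lemma about null spaces of two rows of $\hat{\boldsymbol{F}}$ and uses it to force the Paradigm \textbf{(II)} contribution to vanish on a chosen entry. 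You instead exhibit a single explicit pair $(\boldsymbol{F},\boldsymbol{Z}^{*})$ --- three frequency rows of $\hat{\boldsymbol{F}}$ equal to $(0,\dots,0,1)$ against three independent target rows --- and defeat every combination by elementary means: vanishing of the \textbf{(I)}/\textbf{(III)} contributions on the zeroed columns, non-parallelism of the targets against the single shared direction $\boldsymbol{R}_{C:}$, and a dimension count for the two-$\boldsymbol{R}$ case. What your approach buys is uniformity (one counterexample for all cases) and self-containedness: you need no external solvability theorem, and your non-parallelism/dimension arguments sidestep the paper's null-space covering lemma entirely (which, as literally stated --- that the union of two null spaces equals $\mathbb{R}^{C}$ --- is the most delicate point of the paper's argument). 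What the paper's approach buys is a characterization flavor for the $(\mathcal{T}^{(2)}+\mathcal{T}^{(2)})$ case: Roth's criterion gives a necessary and sufficient condition for constructibility, from which a whole family of failure instances (any with $\mathrm{rank}(\hat{\boldsymbol{Z}}^{*})>2\,\mathrm{rank}(\hat{\boldsymbol{F}})$) is read off, whereas your argument only certifies the one instance you build. Both arguments implicitly require $C\geq 3$, so your restriction is not a loss of generality relative to the paper.
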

The proof of this corollary is moved to Appendix~\ref{appendix-proofs-and-derivations-corollary-simple-sum-still-failures}. 
Corollary~\ref{corollary-simple-sum-still-failures} further proves that the drawbacks cannot be tackled through naively increasing complexity with existing convolution paradigms. 
Hence, based on the theoretical results above, developing an advanced paradigm of spectral graph convolution which can avoid potential failures on target output construction is thereby necessary.

\subsection{From 2-D Signal Convolution to 2-D Graph Convolution}
\label{section-2dgraphconv-2d-signalconv-2d-graphconv}
Instead of conventionally considering vector-wise convolution as Eq.~\ref{equation-graphconv-1d}, we view $\boldsymbol{F}$ as a whole two-dimensional (2-D) signal and start with the convolution operation on general 2-D signal. 
Particularly, we consider fully-connected network (FCN), a fundamental framework for processing 2-D signals (e.g., image signals), which can be considered as a general paradigm of convolution on 2-D signals~\cite{deep-learning,FC=Conv-1,FC=Conv-2}. 
Specifically, given input 2-D signal $\boldsymbol{F}$, the output of FCN convolution $\boldsymbol{Z}$ is formulated as follows: 
\begin{equation}
\label{equation-2dconv}
\boldsymbol{Z}_{ij}=\sum_{n=1}^{N}\sum_{c=1}^{C}\boldsymbol{\Omega}_{nc}^{(i,j)}\boldsymbol{F}_{nc}\ .
\end{equation}
Here, $\boldsymbol{\Omega}_{nc}^{(i,j)}\in\mathbb{R}$ is the element of $\boldsymbol{\Omega}^{(i,j)}\in\mathbb{R}^{N\times C}$ denoting the coefficient matrix for generating $\boldsymbol{Z}_{ij}$. 
Numerous works have demonstrated the significant capability and necessity of FCN convolution on 2-D signal processing~\cite{fully-connected-1,fully-connected-2,fully-connected-3,fully-connected-4,fully-connected-5,fully-connected-6}. 
Specially, we derive Eq.~\ref{equation-2dconv} to a more intuitive paradigm as follows: 
\begin{align}
\label{equation-2stepsconv}
\textit{Vec}\left(\boldsymbol{Z}\right)=&
\left[\begin{array}{ccc}
    \boldsymbol{\Phi}^{(1,1)} & \cdots & \boldsymbol{\Phi}^{(C,1)} \\
    \vdots & \ddots & \vdots \\
    \boldsymbol{\Phi}^{(1,C)} & \cdots & \boldsymbol{\Phi}^{(C,C)}
\end{array}\right]
\cdot \textit{Vec}\left(\boldsymbol{F}\right)\ .
\end{align}
Here, $\boldsymbol{\Phi}^{(c,j)}\in\mathbb{R}^{N\times N}$ ($c,j=1,2,...,C$) is coefficient matrix whose elements $\boldsymbol{\Phi}^{(c,j)}_{in}$ are equal to $\boldsymbol{\Omega}_{nc}^{(i,j)}$ in Eq.~\ref{equation-2dconv}; $\textit{Vec}(\cdot)$ denotes the vectorization operator~\cite{linearalgebra}. 
We prove the equivalence between Eq.~\ref{equation-2dconv} and Eq.~\ref{equation-2stepsconv} in Appendix~\ref{appendix-proofs-and-derivations-2dconv-to-2stepsconv}. 
Note that operator $\boldsymbol{\Phi}^{(c,j)}$ in Eq.~\ref{equation-2stepsconv} performs convolution operation to the columns of $\boldsymbol{F}$, which is consistent to the vector-wise convolution as Eq.~\ref{equation-graphconv-1d}. 
Based on this insight, we thereby replace $\boldsymbol{\Phi}^{(c,j)}$ with graph convolution operator $\boldsymbol{\Phi}^{(c,j)}_{\mathcal{G}}=\boldsymbol{U}diag(\boldsymbol{g}^{(c,j)})\boldsymbol{U}^{T}$, thus proposing our 2-D graph convolution defined as follows: 
\begin{definition}
\label{definition-2dgraphconv}
{\bf (2-D Graph Convolution)} The 2-D graph convolution on input graph signal matrix $\boldsymbol{F}\in\mathbb{R}^{N\times C}$ is formulated as follows: 
\begin{align}
\label{equation-2dgraphconv}
\boldsymbol{Z}=&\textit{Vec}^{-1}\left(
\left[\begin{array}{ccc}
    \boldsymbol{\Phi}^{(1,1)}_{\mathcal{G}} & \cdots & \boldsymbol{\Phi}^{(C,1)}_{\mathcal{G}} \\
    \vdots & \ddots & \vdots \\
    \boldsymbol{\Phi}^{(1,C)}_{\mathcal{G}} & \cdots & \boldsymbol{\Phi}^{(C,C)}_{\mathcal{G}}
\end{array}\right]
\cdot \textit{Vec}\left(\boldsymbol{F}\right)\right)\ , 
\end{align}
where $\boldsymbol{\Phi}^{(c,j)}_{\mathcal{G}}=\boldsymbol{U}diag(\boldsymbol{g}^{(c,j)})\boldsymbol{U}^{T}$, $c,j=1,2,...,C$, is graph convolution operator with parameterized graph filter $\boldsymbol{g}^{(c,j)}$, $\boldsymbol{Z}$ denotes convolution output, $\textit{Vec}^{-1}(\cdot)$ denotes the inverse of $\textit{Vec}(\cdot)$ making the equation 
\begin{equation}
\label{equation-vectorization}
(\textit{Vec}\circ \textit{Vec}^{-1})(\boldsymbol{M})=(\textit{Vec}^{-1}\circ \textit{Vec})(\boldsymbol{M})=\boldsymbol{M}\ 
\end{equation}
holds for arbitrary matrix $\boldsymbol{M}$. 
\end{definition}
Similar to Eq.~\ref{equation-2dconv}, the 2-D graph convolution constructs $\boldsymbol{Z}$ through involving all elements of the input $\boldsymbol{F}$, making it a novel spectral graph convolution paradigm on 2-D graph signals. 
In the following section, we will show the proposed 2-D graph convolution is a generalized paradigm of the mentioned paradigms and has significant capability. 

\subsection{Theoretical Investigation of 2-D Graph Convolution}
\label{section-2dgraphconv-theoretical-investigation-of-2dgraphconv}

\subsubsection{2-D graph convolution as generalized convolution operation}
\label{section-2dgraphconv-theoretical-investigation-of-2dgraphconv-regarding-as-generalized}
We refer to the proposed 2-D graph convolution as a generalized graph convolution framework of Paradigm {\bf (I)}, {\bf (II)} and {\bf (III)}, which is summarized in the following proposition: 
\begin{proposition}
\label{proposition-generalized-to-special}
By adjusting the $\boldsymbol{\Phi}^{(c,j)}_{\mathcal{G}}$ with specific constraints, 2-D graph convolution can perform the three graph convolution paradigms, i.e., Paradigm {\bf (I)}, {\bf (II)} and {\bf (III)}. 
\end{proposition}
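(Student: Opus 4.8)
The plan is to show that each of the three paradigms arises as a special case of the block-operator form in Eq.~\ref{equation-2dgraphconv} by specializing the filters $\boldsymbol{g}^{(c,j)}$. The key observation is that the big block matrix acting on $\textit{Vec}(\boldsymbol{F})$ is exactly a $C\times C$ array of $N\times N$ convolution operators, and $\textit{Vec}(\boldsymbol{F})$ stacks the columns $\boldsymbol{F}_{:1},\dots,\boldsymbol{F}_{:C}$; hence the $j$-th block-row produces $\boldsymbol{Z}_{:j}=\sum_{c=1}^{C}\boldsymbol{\Phi}^{(c,j)}_{\mathcal{G}}\boldsymbol{F}_{:c}$. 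Everything reduces to choosing the filters so that this column-wise identity matches each paradigm.

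I would carry out three short cases. \textbf{Paradigm (III):} take $\boldsymbol{g}^{(c,j)}=\boldsymbol{0}$ for $c\neq j$ and $\boldsymbol{g}^{(c,c)}=\boldsymbol{g}^{(c)}$; then $\boldsymbol{Z}_{:j}=\boldsymbol{\Phi}^{(j)}_{\mathcal{G}}\boldsymbol{F}_{:j}$, which is exactly Eq.~\ref{equation-paradigm-individual-filter-and-no-feature-interaction}. \textbf{Paradigm (I):} further tie all the diagonal filters together, $\boldsymbol{g}^{(c)}=\boldsymbol{g}$ for every $c$, so $\boldsymbol{Z}_{:j}=\boldsymbol{\Phi}_{\mathcal{G}}\boldsymbol{F}_{:j}$ for all $j$, i.e.\ $\boldsymbol{Z}=\boldsymbol{\Phi}_{\mathcal{G}}\boldsymbol{F}$, recovering Eq.~\ref{equation-paradigm-single-filter-and-no-feature-interaction}. \textbf{Paradigm (II):} we want $\boldsymbol{Z}=\boldsymbol{\Phi}_{\mathcal{G}}\boldsymbol{F}\boldsymbol{R}$, whose $j$-th column is $\boldsymbol{\Phi}_{\mathcal{G}}\sum_{c}\boldsymbol{R}_{cj}\boldsymbol{F}_{:c}=\sum_{c}\boldsymbol{R}_{cj}\boldsymbol{\Phi}_{\mathcal{G}}\boldsymbol{F}_{:c}$; matching with $\boldsymbol{Z}_{:j}=\sum_{c}\boldsymbol{\Phi}^{(c,j)}_{\mathcal{G}}\boldsymbol{F}_{:c}$ suggests setting $\boldsymbol{\Phi}^{(c,j)}_{\mathcal{G}}=\boldsymbol{R}_{cj}\boldsymbol{\Phi}_{\mathcal{G}}$, i.e.\ $\boldsymbol{g}^{(c,j)}=\boldsymbol{R}_{cj}\,\boldsymbol{g}$ (using that scaling the filter scales the operator, since $\boldsymbol{U}diag(\boldsymbol{R}_{cj}\boldsymbol{g})\boldsymbol{U}^{T}=\boldsymbol{R}_{cj}\boldsymbol{U}diag(\boldsymbol{g})\boldsymbol{U}^{T}$).

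The only mild subtlety to address is well-definedness of the reduction for Paradigm (II): the map $\boldsymbol{F}\mapsto\sum_{c}\boldsymbol{\Phi}^{(c,j)}_{\mathcal{G}}\boldsymbol{F}_{:c}$ must equal $(\boldsymbol{\Phi}_{\mathcal{G}}\boldsymbol{F}\boldsymbol{R})_{:j}$ as linear maps, which follows from linearity of matrix multiplication; I would just state this rather than belabor it. I would also note explicitly that the ``specific constraints'' on $\boldsymbol{\Phi}^{(c,j)}_{\mathcal{G}}$ in the proposition are precisely the three filter-choices above (block-diagonal with a common filter for (I); block-diagonal with independent filters for (III); rank-one-in-$\boldsymbol{R}$ scaling of a common filter for (II)). I do not anticipate a real obstacle here — the statement is essentially a bookkeeping exercise once the column-stacking structure of $\textit{Vec}$ is unwound; the main thing to get right is keeping the row/column index order in $\boldsymbol{\Phi}^{(c,j)}_{\mathcal{G}}$ consistent with the ordering used in Eq.~\ref{equation-2dgraphconv} and with the definition $\boldsymbol{\Phi}^{(c,j)}_{in}=\boldsymbol{\Omega}^{(i,j)}_{nc}$.
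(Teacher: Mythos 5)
Your proposal is correct and follows essentially the same route as the paper's Appendix~\ref{appendix-proofs-and-derivations-proposition-generalized-to-special}: unwind the block structure into the column-wise identity $\boldsymbol{Z}_{:j}=\sum_{c}\boldsymbol{\Phi}^{(c,j)}_{\mathcal{G}}\boldsymbol{F}_{:c}$, then take block-diagonal operators with a shared filter for Paradigm \textbf{(I)}, scalar multiples $\boldsymbol{R}_{cj}\boldsymbol{\Phi}_{\mathcal{G}}$ for Paradigm \textbf{(II)}, and block-diagonal operators with independent filters for Paradigm \textbf{(III)}. Your added remark that $\boldsymbol{U}diag(\boldsymbol{R}_{cj}\boldsymbol{g})\boldsymbol{U}^{T}=\boldsymbol{R}_{cj}\boldsymbol{\Phi}_{\mathcal{G}}$, so the scaled block is still a legitimate graph convolution operator, is a small but worthwhile point the paper leaves implicit.
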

We prove the Proposition~\ref{proposition-generalized-to-special} holds with derivation in Appendix~\ref{appendix-proofs-and-derivations-proposition-generalized-to-special}. 
Thus, based on the Proposition~\ref{proposition-generalized-to-special}, we show that our 2-D graph convolution unifies the existing popular graph convolution paradigms into one generalized framework, where $\boldsymbol{\Phi}^{(c,j)}_{\mathcal{G}}$ serves as key role in such representation. 

\subsubsection{2-D graph convolution on error-free target output construction}
\label{section-2dgraphconv-theoretical-investigation-of-2dgraphconv-target-output-construction}
With considering the convolution operation as target output construction with input and convolution parameters, we call that the proposed 2-D graph convolution is error-free construction towards arbitrary target output. 
Specifically, our statement is guaranteed by the following theorem: 
\begin{theorem}
\label{theorem-constructing-target-output-embeddings}
Let $\boldsymbol{F}$ and $\boldsymbol{Z}^{*}$ be the same definitions as Theorem~\ref{theorem-existing-paradigms-fails-to-construct-optimal-output}, and $\Bar{\mathcal{T}}_{\Bar{\boldsymbol{\Psi}}}$ denotes the function performing 2-D graph convolution as Eq.~\ref{equation-2dgraphconv}, where $\Bar{\boldsymbol{\Psi}}$ denotes parameter set. 
There always exists at least one parameter value of $\Bar{\boldsymbol{\Psi}}$ (denoted as $\Bar{\boldsymbol{\psi}}$), making $\Bar{\mathcal{T}}_{\Bar{\boldsymbol{\Psi}}}(\boldsymbol{F})|_{\Bar{\boldsymbol{\Psi}}=\Bar{\boldsymbol{\psi}}}=\boldsymbol{Z}^{*}$. 
That is, the 2-D graph convolution is always capable of constructing $\boldsymbol{Z}^{*}$ with $0$ construction error. 
\end{theorem}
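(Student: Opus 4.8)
The plan is to pass to the graph frequency domain, where the 2-D graph convolution decouples across frequencies into an arbitrary linear reweighting of the feature (column) dimension; the nontriviality hypothesis on $\boldsymbol{F}$ then lets us choose each per-frequency map so that it reproduces the corresponding row of the target.

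First I would unfold Eq.~\ref{equation-2dgraphconv}. Using the column-stacking convention for $\textit{Vec}(\cdot)$ together with the equivalence of Eq.~\ref{equation-2dconv} and Eq.~\ref{equation-2stepsconv} established in the appendix, reading off the $j$-th block row shows that Eq.~\ref{equation-2dgraphconv} is exactly $\boldsymbol{Z}_{:j}=\sum_{c=1}^{C}\boldsymbol{U}\,diag(\boldsymbol{g}^{(c,j)})\,\boldsymbol{U}^{T}\boldsymbol{F}_{:c}$ for each $j=1,\dots,C$. Left-multiplying by $\boldsymbol{U}^{T}$ and writing $\Hat{\boldsymbol{F}}=\boldsymbol{U}^{T}\boldsymbol{F}$, $\Hat{\boldsymbol{Z}}=\boldsymbol{U}^{T}\boldsymbol{Z}$, $\Hat{\boldsymbol{Z}}^{*}=\boldsymbol{U}^{T}\boldsymbol{Z}^{*}$, this collapses entrywise to $\Hat{\boldsymbol{Z}}_{kj}=\sum_{c=1}^{C}\boldsymbol{g}^{(c,j)}_{k}\Hat{\boldsymbol{F}}_{kc}$ for $k=1,\dots,N$. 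Hence, if for each frequency index $k$ we collect the matrix $\boldsymbol{G}^{(k)}\in\mathbb{R}^{C\times C}$ with $\boldsymbol{G}^{(k)}_{cj}=\boldsymbol{g}^{(c,j)}_{k}$, the action of the 2-D graph convolution decouples frequency-by-frequency into the row identity $\Hat{\boldsymbol{Z}}_{k:}=\Hat{\boldsymbol{F}}_{k:}\boldsymbol{G}^{(k)}$, and choosing the parameter set $\{\boldsymbol{g}^{(c,j)}\}_{c,j}$ is the same as freely choosing the $N$ matrices $\boldsymbol{G}^{(1)},\dots,\boldsymbol{G}^{(N)}$.

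Second I would solve the construction one frequency at a time. Fix $k$. By hypothesis the row vector $\Hat{\boldsymbol{F}}_{k:}$ is nonzero, so the linear system $\Hat{\boldsymbol{F}}_{k:}\boldsymbol{G}^{(k)}=\Hat{\boldsymbol{Z}}^{*}_{k:}$ in the unknown $\boldsymbol{G}^{(k)}$ is consistent; e.g. the rank-one matrix $\boldsymbol{G}^{(k)}=\Hat{\boldsymbol{F}}_{k:}^{T}\,\Hat{\boldsymbol{Z}}^{*}_{k:}/\|\Hat{\boldsymbol{F}}_{k:}\|_{2}^{2}$ solves it, since $\Hat{\boldsymbol{F}}_{k:}\Hat{\boldsymbol{F}}_{k:}^{T}=\|\Hat{\boldsymbol{F}}_{k:}\|_{2}^{2}$. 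Reading the entries of these $N$ matrices back out defines filter values $\boldsymbol{g}^{(c,j)}_{k}$, hence filters $\boldsymbol{g}^{(c,j)}\in\mathbb{R}^{N}$ and a concrete parameter value $\Bar{\boldsymbol{\psi}}$. With this choice $\Hat{\boldsymbol{Z}}_{k:}=\Hat{\boldsymbol{Z}}^{*}_{k:}$ for every $k$, so $\Hat{\boldsymbol{Z}}=\Hat{\boldsymbol{Z}}^{*}$; applying the inverse GFT (left-multiplying by $\boldsymbol{U}$) gives $\Bar{\mathcal{T}}_{\Bar{\boldsymbol{\Psi}}}(\boldsymbol{F})|_{\Bar{\boldsymbol{\Psi}}=\Bar{\boldsymbol{\psi}}}=\boldsymbol{Z}=\boldsymbol{Z}^{*}$, which is the claim.

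The one delicate point, and the step I would check most carefully, is the index bookkeeping in the first step: making the $\textit{Vec}$ convention, the block layout of the operator matrix in Eq.~\ref{equation-2dgraphconv}, and the placement of the superscripts $(c,j)$ versus $(j,c)$ mutually consistent, so that the per-frequency relation genuinely reads $\Hat{\boldsymbol{Z}}_{k:}=\Hat{\boldsymbol{F}}_{k:}\boldsymbol{G}^{(k)}$ rather than a transposed variant; the appendix equivalence of Eq.~\ref{equation-2dconv} and Eq.~\ref{equation-2stepsconv} is exactly what nails this down. Everything after that is elementary linear algebra. I note in passing that the argument proves slightly more than the statement requires: when $C>1$ the set of valid $\Bar{\boldsymbol{\psi}}$ is a positive-dimensional affine set, each frequency contributing a $C(C-1)$-dimensional family of admissible $\boldsymbol{G}^{(k)}$.
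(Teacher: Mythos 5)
Your proof is correct and follows essentially the same route as the paper's: pass to the frequency domain, observe that the 2-D convolution decouples frequency-by-frequency into the relation $\Hat{\boldsymbol{Z}}_{k:}=\Hat{\boldsymbol{F}}_{k:}\boldsymbol{G}^{(k)}$, and use the no-zero-row hypothesis to solve each underdetermined linear system. The only difference is that you exhibit an explicit rank-one solution $\boldsymbol{G}^{(k)}=\Hat{\boldsymbol{F}}_{k:}^{T}\Hat{\boldsymbol{Z}}^{*}_{k:}/\|\Hat{\boldsymbol{F}}_{k:}\|_{2}^{2}$ where the paper merely asserts solvability "similar to solving linear equations," so your version is, if anything, slightly more constructive.
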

The proof is deferred to Appendix~\ref{appendix-proofs-and-derivations-theorem-constructing-target-output-embeddings}. 
Theorem~\ref{theorem-constructing-target-output-embeddings} shows that the representation range of 2-D graph convolution always covers the target output, demonstrating the significant capability of 2-D graph convolution on (arbitrary) target output construction. 
Furthermore, compared to other paradigms associated with certain failure cases in constructing target output (Theorem~\ref{theorem-existing-paradigms-fails-to-construct-optimal-output}, Corollary~\ref{corollary-simple-sum-still-failures}), our 2-D graph convolution is capable to perform $0$ construction error, demonstrating its superiority against other paradigms.


\subsubsection{Irreducibility of parameter number in 2-D graph convolution}
\label{section-2dgraphconv-theoretical-investigation-of-2dgraphconv-irreducible-number-parameters}
We call the number of parameters in 2-D graph convolution irreducible for performing $0$ construction error towards target output, which is summarized in the following theorem: 
\begin{theorem}
\label{theorem-2dgraphconv-irreducible-number-parameters}
Let $\boldsymbol{\Psi}$ be the parameter set of 2-D graph convolution. 
For each subset $\boldsymbol{\Psi}_{\text{sub}}\subset\boldsymbol{\Psi}$, let $\mathcal{T}_{\boldsymbol{\Psi}_{\text{sub}}}$ be the 2-D graph convolution with partial parameters $\boldsymbol{\Psi}_{\text{sub}}$ (i.e., parameters in $\boldsymbol{\Psi}\setminus\boldsymbol{\Psi}_{\text{sub}}$ are constant or shared by $\boldsymbol{\Psi}_{\text{sub}}$). 
For any $\boldsymbol{\Psi}_{\text{sub}}$, there always exists certain cases of $\boldsymbol{F}$, $\boldsymbol{Z}^{*}$, making construction error $||\mathcal{T}_{\boldsymbol{\Psi}_{\text{sub}}}(\boldsymbol{F})-\boldsymbol{Z}^{*}||_{F}$ cannot achieve $0$ for all parameter values of $\boldsymbol{\Psi}_{\text{sub}}$. 
\end{theorem}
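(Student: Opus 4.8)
The plan is to move to the graph-frequency domain, where the $2$-D graph convolution is linear in its parameters and decouples across frequencies, and then, for any strict reduction of the parameter set, to engineer an input $\boldsymbol{F}$ whose attainable convolution outputs lie in a proper affine subspace of $\mathbb{R}^{N\times C}$; any target $\boldsymbol{Z}^{*}$ outside that subspace witnesses the claim. First I would record the frequency-domain form: writing $\Hat{\boldsymbol{F}}=\boldsymbol{U}^{T}\boldsymbol{F}$, $\Hat{\boldsymbol{Z}}=\boldsymbol{U}^{T}\boldsymbol{Z}$ and substituting $\boldsymbol{\Phi}^{(c,j)}_{\mathcal{G}}=\boldsymbol{U}\,diag(\boldsymbol{g}^{(c,j)})\boldsymbol{U}^{T}$ into Eq.~\ref{equation-2dgraphconv} (using $\boldsymbol{U}^{T}\boldsymbol{U}=\boldsymbol{I}$) gives, entrywise, $\Hat{\boldsymbol{Z}}_{kj}=\sum_{c=1}^{C}\boldsymbol{g}^{(c,j)}_{k}\,\Hat{\boldsymbol{F}}_{kc}$, i.e. $\Hat{\boldsymbol{Z}}_{k:}=\Hat{\boldsymbol{F}}_{k:}\boldsymbol{G}_{k}$ with $(\boldsymbol{G}_{k})_{cj}=\boldsymbol{g}^{(c,j)}_{k}$. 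Thus, for fixed $\boldsymbol{F}$, the map from $\boldsymbol{\Psi}$ to $\boldsymbol{Z}$ is linear and decouples over the frequency index $k$, and each of the $C^{2}N$ scalars $\boldsymbol{g}^{(c,j)}_{k}$ touches only the single output entry $\Hat{\boldsymbol{Z}}_{kj}$, appearing there as the coefficient of $\Hat{\boldsymbol{F}}_{kc}$. Consequently, by placing a standard-basis row vector $\boldsymbol{e}_{c}^{T}$ (or a difference $\boldsymbol{e}_{c}^{T}-\boldsymbol{e}_{c'}^{T}$) in a chosen row of $\Hat{\boldsymbol{F}}$, one can make a chosen output entry of $\Hat{\boldsymbol{Z}}$ depend on a single prescribed parameter.

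Next, since $\boldsymbol{\Psi}_{\text{sub}}$ is a proper subset of $\boldsymbol{\Psi}$, at least one scalar $p^{\star}=\boldsymbol{g}^{(c_{0},j_{0})}_{k_{0}}$ is not free in $\mathcal{T}_{\boldsymbol{\Psi}_{\text{sub}}}$: either it is pinned to a constant $a$, or it is tied to a free scalar $q=\boldsymbol{g}^{(c_{1},j_{1})}_{k_{1}}$ with $(c_{1},j_{1},k_{1})\neq(c_{0},j_{0},k_{0})$. I would then specify $\Hat{\boldsymbol{F}}$ directly (and take $\boldsymbol{F}=\boldsymbol{U}\Hat{\boldsymbol{F}}$, cf. Eq.~\ref{equation-GFT-inverseGFT}), keeping every row of $\Hat{\boldsymbol{F}}$ nonzero so that $\boldsymbol{F}$ is nontrivial on frequency, and shaping the rows indexed by $k_{0}$ (and $k_{1}$) so as to neutralize $p^{\star}$: taking $\Hat{\boldsymbol{F}}_{k_{0}:}=\boldsymbol{e}_{c_{0}}^{T}$ collapses $\Hat{\boldsymbol{Z}}_{k_{0}j_{0}}$ to exactly $p^{\star}$, hence to the constant $a$ when $p^{\star}$ is pinned; if $p^{\star}=q$ with $k_{1}\neq k_{0}$, additionally taking $\Hat{\boldsymbol{F}}_{k_{1}:}=\boldsymbol{e}_{c_{1}}^{T}$ forces $\Hat{\boldsymbol{Z}}_{k_{0}j_{0}}=\Hat{\boldsymbol{Z}}_{k_{1}j_{1}}$; and if the two tied slots share a frequency $k$, then $\Hat{\boldsymbol{F}}_{k:}=\boldsymbol{e}_{c_{0}}^{T}$ forces $\Hat{\boldsymbol{Z}}_{kj_{0}}=\Hat{\boldsymbol{Z}}_{kj_{1}}$ when $c_{0}=c_{1}$, while $\Hat{\boldsymbol{F}}_{k:}=\boldsymbol{e}_{c_{0}}^{T}-\boldsymbol{e}_{c_{1}}^{T}$ cancels the equal contributions and forces $\Hat{\boldsymbol{Z}}_{kj_{0}}=0$ when $j_{0}=j_{1}$.

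In each of these cases the attainable outputs satisfy a fixed nontrivial affine equation --- one entry of $\boldsymbol{U}^{T}\boldsymbol{Z}$ pinned to a constant, or two entries forced equal --- so the range of $\mathcal{T}_{\boldsymbol{\Psi}_{\text{sub}}}(\boldsymbol{F})$, as the parameters of $\boldsymbol{\Psi}_{\text{sub}}$ vary, lies in a proper affine subspace of $\mathbb{R}^{N\times C}$. Finally, I would take $\boldsymbol{Z}^{*}=\boldsymbol{U}\Hat{\boldsymbol{Z}}^{*}$ with $\Hat{\boldsymbol{Z}}^{*}$ violating that equation --- e.g. $\Hat{\boldsymbol{Z}}^{*}$ equal to $a+1$ in entry $(k_{0},j_{0})$ and $0$ elsewhere, in the pinned case --- so that $\|\mathcal{T}_{\boldsymbol{\Psi}_{\text{sub}}}(\boldsymbol{F})-\boldsymbol{Z}^{*}\|_{F}>0$ for every value of $\boldsymbol{\Psi}_{\text{sub}}$, which is exactly the assertion; this stands in sharp contrast with Theorem~\ref{theorem-constructing-target-output-embeddings}, where the full parameter set always attains $\boldsymbol{Z}^{*}$ with $0$ error.

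The main obstacle is the exhaustiveness of the middle step: one must verify that no pattern of pinned and tied parameters evades the neutralization. The delicate configuration is a tie between two scalars that lie at the same frequency but in a different input column and a different output column --- there a single rank-one row of $\Hat{\boldsymbol{F}}$ does not on its own shrink the attainable set --- so the argument has to bring in the interplay with the other pinned or shared parameters, or, when more than one scalar is removed, recruit a second frequency, in order to force the needed linear dependence among output entries. Pushing this case analysis through, so that a strict reduction can never simultaneously preserve, for all nontrivial $\boldsymbol{F}$ and all $\boldsymbol{Z}^{*}$, the universal-construction property of Theorem~\ref{theorem-constructing-target-output-embeddings}, is where the real effort concentrates.
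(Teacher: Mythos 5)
Your strategy coincides with the paper's: pass to the frequency domain, where the convolution decouples as $\Hat{\boldsymbol{Z}}_{k:}=\Hat{\boldsymbol{F}}_{k:}\boldsymbol{G}_{k}$ with $(\boldsymbol{G}_{k})_{cj}=\boldsymbol{g}^{(c,j)}_{k}$, split the reduction of $\boldsymbol{\Psi}$ into ``pinned to a constant'' and ``tied to another parameter,'' and for each configuration engineer a frequency-nontrivial $\Hat{\boldsymbol{F}}$ that traps the attainable outputs in a proper affine subspace. Your treatment of the pinned case and of ties across distinct frequencies reproduces the paper's constructions, and your explicit handling of same-frequency ties with $c_{0}=c_{1}$ or with $j_{0}=j_{1}$ is more careful than the paper, which works out only the pinned case and the tie with all three indices distinct and then asserts the remaining cases are ``similar.''

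The configuration you flag as delicate --- a single tie $\boldsymbol{g}^{(c_{0},j_{0})}_{k}\equiv\boldsymbol{g}^{(c_{1},j_{1})}_{k}$ at one frequency with $c_{0}\neq c_{1}$ and $j_{0}\neq j_{1}$ --- is a genuine gap, and it cannot be closed by any refinement of the neutralization technique, because for this reduction no counterexample with frequency-nontrivial $\boldsymbol{F}$ exists. Write $t$ for the shared value and $\boldsymbol{f}=\Hat{\boldsymbol{F}}_{k:}\neq\boldsymbol{0}$. The attainable entry in output column $j_{0}$ is $\boldsymbol{f}_{c_{0}}t+\sum_{c\neq c_{0}}\boldsymbol{f}_{c}(\boldsymbol{G}_{k})_{cj_{0}}$ and that in column $j_{1}$ is $\boldsymbol{f}_{c_{1}}t+\sum_{c\neq c_{1}}\boldsymbol{f}_{c}(\boldsymbol{G}_{k})_{cj_{1}}$; all summed coefficients are free, and because $j_{0}\neq j_{1}$ the two sums involve disjoint sets of free parameters, while every other output column is fully free. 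If $\boldsymbol{f}$ is supported only on $c_{0}$, choose $t$ to hit the $j_{0}$ target and use the free $(\boldsymbol{G}_{k})_{c_{0}j_{1}}$ for the $j_{1}$ target (symmetrically for support on $c_{1}$, or on any other single index); if the support of $\boldsymbol{f}$ has at least two elements, set $t=0$ and note each of the two sums retains a free coefficient multiplying a nonzero $\boldsymbol{f}_{c}$. Hence every target row is attainable for every nonzero $\boldsymbol{f}$, the range is all of $\mathbb{R}^{C}$ per frequency, and no affine obstruction of the kind your argument (and the paper's) relies on can exist. This is more than a hole in your write-up: it shows the paper's ``the rest cases are similar'' claim fails for exactly this case, and that the theorem, read with the nontriviality condition of Theorem~\ref{theorem-constructing-target-output-embeddings} (the only reading under which it asserts irreducibility relative to the full model), admits a counterexample; allowing $\Hat{\boldsymbol{F}}$ to have a zero row would rescue the statement only vacuously, since the full parameterization fails there too. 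Your proposed repairs --- recruiting a second frequency or exploiting other pinned parameters --- cannot help when the reduction consists of exactly this one tie, because all remaining parameters are free.
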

We prove this theorem in Appendix~\ref{appendix-proofs-and-derivations-theorem-2dgraphconv-irreducible-number-parameters}. 
Theorem~\ref{theorem-2dgraphconv-irreducible-number-parameters} proves that any special cases of the 2-D graph convolution, e.g., Paradigm {\bf (I)}, {\bf (II)} and {\bf (III)}, are inappropriate paradigm on target output construction. 
Therefore, despite involving more parameters than other paradigms, the proposed 2-D graph convolution is reasonable and necessary. 
On the other hand, since the paradigms with adding further parameters on 2-D graph convolution can also achieve $0$ construction error, the irreducibility of parameter number implies that our 2-D graph convolution is parameter-efficient compared to more complicated paradigms achieving $0$ error.


\section{ChebNet2D}
\label{section-chebnet2d}
To implement the 2-D graph convolution in practical scenarios, in this section we propose ChebNet2D, a spectral GNN performing efficient and effective construction of 2-D graph convolution with Chebyshev polynomial~\cite{poly_chebyshev}. 
We highlight each essential part of ChebNet2D to provide a comprehensive introduction to our method.  

\subsection{Filter Construction via Polynomial Approximation}
\label{section-chebnet2d-filter-construction-with-polynomial-approximation}
\par {\bf Chebyshev polynomial-based graph filter.} According to Eq.~\ref{equation-2dgraphconv}, implementation of 2-D graph convolution involves $NC^{2}$ number of learnable parameters, leading to critical issues in high model complexity. 
Inspired by ChebNet~\cite{ChebNet}, we perform efficient construction to the filter $\boldsymbol{g}^{(c,j)}$ with the same Chebyshev polynomial approximation~\cite{poly_chebyshev}, making the filter construction be $\boldsymbol{g}^{(c,j)}\doteq\sum_{d=0}^{D}\theta_{d}^{(c,j)}T_{d}(\boldsymbol{\lambda})$, where $T_{d}(\cdot)$ is the $d$-th order Chebyshev polynomial with $\theta_{d}^{(c,j)}$ as learnable coefficient, and $D$ is the degree of truncated polynomial. 
Accordingly, the convolution operator $\boldsymbol{\Phi}^{(c,j)}_{\mathcal{G}}$ is formulated as follows: 
\begin{align}
\label{equation-chebyshev-approximation}
\boldsymbol{\Phi}^{(c,j)}_{\mathcal{G}}&\doteq\sum_{d=0}^{D}\theta_{d}^{(c,j)}T_{d}(\Hat{\boldsymbol{L}}), 
\end{align}
where $\Hat{\boldsymbol{L}}=\boldsymbol{L}-\boldsymbol{I}$ is set to match the convergence domain of Chebyshev polynomial~\cite{polyapprox_1,polyapprox_2,poly_chebyshev}. 
With the polynomial approximation above, the parameter number of 2-D graph convolution is reduced to $(D+1)C^{2}$ far less than the original $NC^{2}$, making significant improvement on parameter efficiency. 
Moreover, Chebyshev polynomial maintains \textbf{orthogonality} on polynomial basis~\cite{orthogonalpoly_1_poly_jacobian,orthogonalpoly_2}, guaranteeing superior convergence rate on filter construction~\cite{JacobiConv,OptBasisGNN}. 
\par {\bf High quality construction with Chebyshev interpolation.} As discussed in~\cite{ChebNetII}, constructing graph filters with high degree Chebyshev polynomial may encounter Runge Phenomenon~\cite{RungePhenomenon}, which leads to a detrimental effect on the construction quality and can be alleviated with involving Chebyshev interpolation~\cite{chebyshevinterpolation} defined as follows:
\begin{definition}
\label{definition-chebyshev-interpolation}
({\bf Chebyshev interpolation})~\cite{ChebNetII} Given a continuous filter function $g(x)$ defined on $(-1,1)$, let $x_{b}=\cos(\frac{b+1/2}{K+1}\pi)$, $b=0,1,...,D$ denote the Chebyshev nodes for $T_{D+1}(x)$, i.e., the zeros of $T_{D+1}(x)$, and $g(x_{b})$ denotes the function value at $x_{b}$. 
The Chebyshev interpolation of $g(x)$ is defined as below:
\begin{equation}
\label{equation-chebyshev-interpolation}
g(x)=\sum_{d=0}^{D}\theta_{d}^{'}T_{d}(x),\ \theta_{d}=\frac{2}{D+1}\sum_{b=0}^{D}g(x_{b})T_{d}(x_{b}),
\end{equation}
where the prime indicates the first term is to be halved, i.e., $\theta_{0}^{'}=\theta_{0}/2$, $\theta_{1}^{'}=\theta_{1}$,...,$\theta_{D}^{'}=\theta_{D}$. 
\end{definition}
Thus, similar to~\cite{ChebNetII}, through replacing $g(x_{b})$ in Eq.~\ref{equation-chebyshev-interpolation} with learnable parameter $\theta_{b}^{(c,j)}$, the construction of $\boldsymbol{\Phi}^{(c,j)}_{\mathcal{G}}$ with Chebyshev interpolation is formulated as follows: 
\begin{align}
\label{equation-chebyshev-interpolation-approximation}
\boldsymbol{\Phi}^{(c,j)}_{\mathcal{G}}&\doteq\frac{2}{D+1}\sum_{d=0}^{D}\sum_{b=0}^{D}\theta_{b}^{(c,j)}T_{d}(x_{b})T_{d}(\Hat{\boldsymbol{L}}). 
\end{align}
\par {\bf Streamlined paradigm of 2-D graph convolution.} Though being simplified with polynomial approximation, Eq.~\ref{equation-2dgraphconv} involves complicated vectorization operation and large size operator matrix, leading to critical complexity issues on practical implementation. 
To achieve easy-to-implement framework, based on Eq.~\ref{equation-chebyshev-interpolation-approximation}, we reformulate a streamlined formulation of Eq.~\ref{equation-2dgraphconv} as follows: 
\begin{align}
\label{equation-chebyshev-polynomial-2dgraphconv}
\boldsymbol{Z}=\sum_{d=0}^{D}T_{d}(\Hat{\boldsymbol{L}})\boldsymbol{F}\left(\sum_{b=0}^{D}T_{d}(x_{b})\boldsymbol{\Theta}_{::b}\right).  
\end{align}
Here, the $3$D tensor $\boldsymbol{\Theta}\in\mathbb{R}^{C\times C\times (D+1)}$ is named \textit{parameterized coefficient tensor} with $\boldsymbol{\Theta}_{cjb}=\theta_{b}^{(c,j)}$. 
We defer the derivation of obtaining Eq.~\ref{equation-chebyshev-polynomial-2dgraphconv} based on Eq.~\ref{equation-2dgraphconv} and Eq.~\ref{equation-chebyshev-interpolation-approximation} in Appendix~\ref{appendix-proofs-and-derivations-obtaining-chebyshev-polynomial-2dgraphconv}. 
Since $T_{d}(x_{b})$ can be precomputed and $\sum_{b=0}^{D}T_{d}(x_{b})\boldsymbol{\Theta}_{::b}$ involves only the sum of matrices, Eq.~\ref{equation-chebyshev-polynomial-2dgraphconv} provides a significantly streamlined paradigm for the implementation of 2-D graph convolution.

\subsection{Overall Model of ChebNet2D}
\label{section-chebnet2d-overall-model}
Based on Section~\ref{section-chebnet2d-filter-construction-with-polynomial-approximation}, we introduce the model paradigm of our ChebNet2D. 
Specifically, given input graph $\mathcal{G}$ with node feature $\boldsymbol{X}$ and normalized graph Laplacian $\boldsymbol{L}$, the ChebNet2D generates output $\boldsymbol{Z}$ as follows: 
\begin{align}
\label{equation-paradigm-chebnet2d}
\boldsymbol{Z}=\sum_{d=0}^{D}T_{d}(\Hat{\boldsymbol{L}})h_{\eta}(\boldsymbol{X})\left(\sum_{b=0}^{D}T_{d}(x_{b})\boldsymbol{\Theta}_{::b}\right)\ ,
\end{align}
where $h_{\eta}$ denotes an MLP with $\eta$ as parameters. 
With taking $\boldsymbol{F}=h_{\eta}(\boldsymbol{X})$, the ChebNet2D decouples feature transformation and spectral graph convolution, making the model consistent to our theoretical analysis. 
Similar decoupling paradigm has been widely used in recent works and has achieved leading performance~\cite{APPNP,GPRGNN,BernNet-GNN-narrowbandresults-1,ChebNetII,OptBasisGNN}, thereby guaranteeing the effectiveness of such paradigm. 
\par {\bf Complexity analysis.} The computation complexity and parameter complexity of ChebNet2D are $\mathcal{O}(NH(F+C)+DN^{2}C+(D+1)NC^{2})$ and $\mathcal{O}(H(F+C)+(D+1)C^{2})$ respectively, where $H$ denotes the hidden layer dimension of MLP, $C$ is the output dimension, and $D$ is the order of polynomial. 
While the spectral GNNs with similar decoupling paradigm~\cite{APPNP,GPRGNN,BernNet-GNN-narrowbandresults-1,ChebNetII,OptBasisGNN} are less complicated with $\mathcal{O}(NH(F+C)+DN^{2}C)$ on computation complexity and $\mathcal{O}(H(F+C))$ on parameter complexity, the ChebNet2D is still comparable in efficiency. 
Specifically, as both $N$ and $F$ are significantly larger than $C$, $D$ in practical scenarios, compared to the counterparts, the complexity increases on the ChebNet2D, i.e., $(D+1)NC^{2}$ and $(D+1)C^{2}$, are trivial against the major terms $NH(F+C)+DN^{2}C$ and $H(F+C)$. 
In the following section, by showing the training time of ChebNet2D and its counterparts, we verify the competitive efficiency of ChebNet2D.


\section{Related Works}
\label{section-related-works}
\par {\bf Spectral GNNs.} Spectral GNNs are GNNs that perform essential spectral graph convolution on input graph data (signals). 
Previous research focused on designing various spectral graph filters used for the convolution operation, which are categorized into two classes following~\cite{BernNet-GNN-narrowbandresults-1,JacobiConv}: 
\begin{itemize}[leftmargin=*,parsep=2pt,itemsep=2pt,topsep=2pt]
\item \textit{Spectral GNNs with fixed filters}: This class of spectral GNNs are constructed by involving pre-defined and fixed spectral graph filters, leading to significantly simple and easy-to-implement model architectures. 
For instance, APPNP~\cite{APPNP} utilizes Personalized PageRank (PPR)~\cite{pagerank} to build filter functions. 
ADC~\cite{ADC} involves pre-defined convolution (diffusion) operators and constructs hidden representation with modifying variable propagation coefficients. 
GNN-HF/LF~\cite{GNN-HF-LF} constructs filter weights from the perspective of graph optimization functions, which simulates both high- and low-pass filters. 
\item \textit{Spectral GNNs with learnable filters:} This class of spectral GNNs perform convolution with learnable filters adaptive to datasets, leading to flexible models and better task performance. 
For instance, GPRGNN~\cite{GPRGNN} learns a polynomial filter by directly performing gradient descent on the polynomial coefficients. 
DSGC~\cite{DSGC-spectral-2dgnn-2dGFT-gnn-2} performs both row-wise and column-wise graph convolution based on multi-dimensional GFT~\cite{2dGFT}. 
BernNet~\cite{BernNet-GNN-narrowbandresults-1} expresses the filtering operation with Bernstein polynomials. 
ChebNetII~\cite{ChebNetII} improves ChebNet~\cite{ChebNet} with applying Chebyshev interpolation~\cite{chebyshevinterpolation}. 
JacobiConv~\cite{JacobiConv} removes non-linearity and applies Jacobian polynomial~\cite{orthogonalpoly_1_poly_jacobian} to perform column-wise convolution. 
OptBasis~\cite{OptBasisGNN} learns optimal polynomial basis specific to each dataset with Favard's theory, leading to state-of-the-art GNN. 
\end{itemize}
\par {\bf Non-spectral GNNs.} Non-spectral GNNs, also called spatial GNNs, are GNNs designed based on the principle of message-passing neural network (MPNN)~\cite{MessagePNN}. 
Previous works concentrated on performing message aggregation and propagation associated with the spatial structure of graphs~\cite{comprehensivegnn}. 
For instance, GCN~\cite{GCN} are constructed through stacking $1$-order ChebNet layer~\cite{ChebNet} and non-linearity functions. 
GCNII~\cite{gcnii} employs residual connection and identity mapping to GCN, leading to substantial deep-layer GNN with promising performance. 
LINKX~\cite{dataset6-large-hetero} is a simple MLP-based model that embeds both graph structure (adjacency) and node feature to generate output. 
Nodeformer~\cite{nodeformer} propagates messages between arbitrary node pairs in layer-specific latent graphs, which is the first Transformer model that scales all-pair message passing to large graphs. 
GloGNN++~\cite{glognn++} generates a node’s embedding by aggregating information from global nodes in the graph. 


\section{Empirical Studies}
\label{section-empirical-studies}
In this section, by conducting node classification tasks on $18$ benchmark graph datasets, we evaluate the performance of ChebNet2D against state-of-the-art GNNs. 

\subsection{Node Classification}
\label{section-empirical-studies-node-classification}
\par {\bf Datasets.} We use $5$ homophilic datasets, including citation graphs: Cora, CiteSeer, and PubMed~\cite{dataset1-cora}, and the Amazon co-purchase graphs: Computers and Photo~\cite{dataset2-photo-comp}. 
We also involve $5$ heterophilic datasets, including Wikipedia graphs: Chameleon and Squirrel~\cite{dataset4-cham-squi}, co-occurrence graph: Actor~\cite{dataset7-actor}, and webpage graphs from WebKB\footnote{\url{http://www.cs.cmu.edu/afs/cs.cmu.edu/project/theo-11/www/wwkb}}: Texas and Cornell~\cite{dataset3-pei}. 
For data split on these datasets, we take conventional $60\%/20\%/20\%$ train/validation/test split ratio following~\cite{GPRGNN,BernNet-GNN-narrowbandresults-1,ChebNetII,JacobiConv,OptBasisGNN}. 
We generate $10$ random splits for each dataset and evaluate all models on the same splits, where each model is evaluated $10$ times with $10$ random initializations on each random split. 
\par {\bf Baselines.} We take $10$ spectral GNNs associated with different spectral graph convolution paradigms: for Paradigm {\bf (I)}, we include APPNP~\cite{APPNP}, GPRGNN~\cite{GPRGNN}, GNN-HF/LF~\cite{GNN-HF-LF}, BernNet~\cite{BernNet-GNN-narrowbandresults-1}, and state-of-the-art methods ChebNetII~\cite{ChebNetII} and OptBasis~\cite{OptBasisGNN}; for Paradigm {\bf (II)}, we include DSGC~\cite{DSGC-spectral-2dgnn-2dGFT-gnn-2} and SOTA method Spec-GN~\cite{specGN}; for Paradigm {\bf (III)}, we include ADC~\cite{ADC} and SOTA method JacobiConv~\cite{JacobiConv}. 
We further include $5$ non-spectral graph learning methods, including GCN~\cite{GCN}, GCNII~\cite{gcnii}, PDE-GCN~\cite{pdegcn} and SOTA methods Nodeformer~\cite{nodeformer} and GloGNN++~\cite{glognn++}. 
More details about datasets and baselines can be found in Appendix~\ref{appendix-experimental-setup}. 

\begin{table*}[!ht]
  \caption{Results on $10$ medium-sized datasets: Mean accuracy $(\%)$ $\pm$ standard deviation. 
  Baselines are classified into two types, i.e., spectral GNNs (denoted as {\bf spec}) and Non-spectral methods (denoted as {\bf non-spec}).} 
  \vskip 0.15in
  \label{table-result-middle}
  \centering
  \setlength{\tabcolsep}{2pt}
  \renewcommand\arraystretch{1.2}
  \resizebox{\textwidth}{!}{
  \begin{tabular}{lcccccc|ccccc}
    \hline  
    \multirow{2}{*}{Type} & \multirow{2}{*}{Method} 
    & \multicolumn{5}{c|}{Homophilic} & \multicolumn{5}{c}{Heterophilic} \\ \cline{3-12}
    & &   Cora  &  Cite.  &  Pubm.  &  Comp.  &  Photo  &  Cham. &  Squi. &  Texas &  Corn. &  Actor \\ \hline  
    \multirow{10}{*}{{\bf spec}} 
    &  APPNP & $88.94_{\pm1.1}$ & $77.38_{\pm1.5}$ & $88.76_{\pm0.6}$ & $90.22_{\pm0.7}$ & $91.37_{\pm0.7}$ & $53.83_{\pm1.2}$ & $36.42_{\pm1.8}$ & $75.81_{\pm2.9}$ & $72.21_{\pm3.6}$ & $37.66_{\pm1.0}$ \\ 
    &  GPRGNN & $\underline{89.41_{\pm0.7}}$ & $77.73_{\pm1.1}$ & $89.11_{\pm0.4}$ & $90.92_{\pm0.6}$ & $94.27_{\pm0.5}$ & $70.73_{\pm1.4}$ & $55.62_{\pm1.2}$ & $85.03_{\pm3.2}$ & $\underline{85.79_{\pm4.0}}$ & $40.16_{\pm0.8}$ \\ 
    &  GNN-HF/LF & $88.91_{\pm0.8}$ & $77.56_{\pm0.8}$ & $89.07_{\pm0.5}$ & $90.74_{\pm0.3}$ & $94.42_{\pm0.3}$ & $67.81_{\pm1.1}$ & $51.38_{\pm2.9}$ & $81.78_{\pm5.1}$ & $80.48_{\pm3.8}$ & $\underline{41.17_{\pm0.7}}$ \\ 
    &  BernNet & $89.22_{\pm1.5}$ & $77.37_{\pm1.3}$ & $89.42_{\pm0.8}$ & $91.35_{\pm0.6}$ & $95.19_{\pm0.5}$ & $71.66_{\pm1.5}$ & $57.47_{\pm2.0}$ & $82.42_{\pm6.4}$ & $79.35_{\pm5.9}$ & $40.57_{\pm0.9}$ \\ 
    &  ChebNetII & $89.32_{\pm1.1}$ & $77.61_{\pm1.0}$ & $89.26_{\pm0.5}$ & $91.53_{\pm0.8}$ & $95.33_{\pm0.4}$ & $73.48_{\pm1.1}$ & $61.73_{\pm1.5}$ & $\underline{85.81_{\pm4.5}}$ & $\boldsymbol{86.78_{\pm5.3}}$ & $41.03_{\pm1.2}$ \\ 
    &  OptBasis & $89.33_{\pm0.7}$ & $\underline{77.87_{\pm0.9}}$ & $89.82_{\pm0.8}$ & $91.63_{\pm0.4}$ & $95.31_{\pm0.8}$ & $73.88_{\pm1.3}$ & $\underline{63.22_{\pm1.6}}$ & $83.63_{\pm5.0}$ & $84.23_{\pm5.2}$ & $41.11_{\pm0.7}$ \\ 
    &  DSGC & $88.51_{\pm1.3}$ & $76.48_{\pm0.7}$ & $88.45_{\pm0.7}$ & $90.27_{\pm0.6}$ & $93.05_{\pm0.6}$ & $57.38_{\pm1.9}$ & $43.77_{\pm2.1}$ & $79.37_{\pm2.9}$ & $80.12_{\pm4.4}$ & $39.77_{\pm1.3}$ \\ 
    &  Spec-GN & $88.38_{\pm1.1}$ & $77.11_{\pm1.2}$ & $88.93_{\pm0.4}$ & $90.59_{\pm0.3}$ & $94.57_{\pm0.5}$ & $66.63_{\pm1.4}$ & $51.38_{\pm1.8}$ & $73.19_{\pm6.3}$ & $81.68_{\pm6.7}$ & $40.47_{\pm0.8}$ \\ 
    &  ADC & $88.43_{\pm0.7}$ & $76.51_{\pm1.1}$ & $88.62_{\pm0.7}$ & $89.56_{\pm0.5}$ & $90.39_{\pm0.5}$ & $62.71_{\pm1.5}$ & $50.88_{\pm1.3}$ & $76.74_{\pm3.5}$ & $77.50_{\pm2.9}$ & $39.24_{\pm0.8}$ \\ 
    &  JacobiConv & $88.95_{\pm0.6}$ & $77.43_{\pm1.2}$ & $89.53_{\pm0.4}$ & $91.48_{\pm0.4}$ & $\underline{95.41_{\pm0.5}}$ & $\underline{74.19_{\pm1.1}}$ & $58.87_{\pm1.2}$ & $85.48_{\pm3.7}$ & $85.56_{\pm4.9}$ & $40.64_{\pm1.1}$ \\ \hline
    \multirow{5}{*}{\bf \makecell{non- \\ spec}} 
    &  GCN & $86.68_{\pm1.6}$ & $74.89_{\pm1.1}$ & $86.85_{\pm0.8}$ & $88.77_{\pm0.8}$ & $90.56_{\pm0.5}$ & $63.27_{\pm2.2}$ & $43.88_{\pm2.9}$ & $62.69_{\pm10.8}$ & $58.31_{\pm12.8}$ & $35.12_{\pm1.6}$ \\ 
    &  GCNII & $88.51_{\pm1.2}$ & $77.15_{\pm1.2}$ & $89.23_{\pm1.5}$ & $89.72_{\pm1.0}$ & $92.14_{\pm0.8}$ & $62.38_{\pm2.5}$ & $44.53_{\pm2.2}$ & $75.47_{\pm7.5}$ & $69.81_{\pm8.9}$ & $37.81_{\pm0.8}$ \\ 
    &  PDE-GCN & $88.79_{\pm1.1}$ & $77.67_{\pm1.4}$ & $89.21_{\pm0.7}$ & $90.13_{\pm0.7}$ & $91.78_{\pm0.6}$ & $67.61_{\pm1.2}$ & $47.81_{\pm1.5}$ & $\boldsymbol{86.92_{\pm3.8}}$ & $79.83_{\pm4.6}$ & $38.46_{\pm0.9}$ \\ 
    & Nodeformer & $87.73_{\pm1.3}$ & $76.69_{\pm1.1}$ & $\underline{90.03_{\pm0.5}}$ & $\underline{91.67_{\pm0.6}}$ & $95.26_{\pm0.3}$ & $55.87_{\pm1.0}$ & $44.71_{\pm0.8}$ & $82.48_{\pm4.7}$ & $80.63_{\pm5.1}$ & $38.81_{\pm0.7}$ \\ 
    &  GloGNN++ & $88.53_{\pm1.2}$ & $77.02_{\pm1.1}$ & $89.38_{\pm0.4}$ & $90.67_{\pm0.5}$ & $93.67_{\pm0.8}$ & $73.55_{\pm1.3}$ & $60.93_{\pm2.0}$ & $79.72_{\pm4.1}$ & $83.13_{\pm3.9}$ & $39.29_{\pm1.2}$ \\ \hline
    {\bf ours} &  ChebNet2D & $\boldsymbol{91.24_{\pm0.9}}$ & $\boldsymbol{79.23_{\pm1.2}}$ & $\boldsymbol{91.86_{\pm0.6}}$ & $\boldsymbol{93.19_{\pm0.5}}$ & $\boldsymbol{96.83_{\pm0.6}}$ & $\boldsymbol{76.08_{\pm1.3}}$ & $\boldsymbol{65.23_{\pm1.3}}$ & $85.39_{\pm4.3}$ & $84.97_{\pm5.1}$ & $\boldsymbol{42.62_{\pm0.8}}$ \\
    \hline
  \end{tabular}}
\end{table*}
\par {\bf Results.} We report experimental results in Table~\ref{table-result-middle}, where \textbf{boldface} and \underline{underline} denote the best and second result respectively. 
We observe that ChebNet2D outperforms all baselines on $8$ datasets by a noticeable margin, and achieves competitive results on the rest $2$ datasets, i.e., Taxas and Cornell. 
Note that previous research~\cite{dataset8-small-hetero} has pointed out that both Texas and Cornell are too small in size with highly imbalanced classes, which leads to unstable results and misleading evaluations of the baselines. 
Thus, based on the results of the rest $8$ representative datasets, we empirically verify the superior effectiveness of our ChebNet2D.

\subsection{Node Classification on Challenging Datasets}
\label{section-empirical-studies-node-classification-large-challenging}
\par {\bf Datasets.} We further include both large-scale and latest heterophilic datasets into experiments, including large-scale homophilic graphs: ogbn-arxiv and ogbn-products~\cite{dataset5-ogb}, large-scale heterophilic graphs: Penn94, Genius and Gamers~\cite{dataset6-large-hetero}, latest heterophilic graphs: Roman-empire, Tolokers and Amazon-ratings~\cite{dataset8-small-hetero}. 
For large-scale graphs, we use the split provided by the original papers, where each model is evaluated $10$ times on each dataset. 
For the latest heterophilic graphs, we generate $10$ random splits for each dataset with $50\%/25\%/25\%$ train/test/valid ratio as suggested in~\cite{dataset8-small-hetero}, where each model is evaluated $10$ times on each split. 
\par {\bf Baselines.} We take GCN and other $9$ state-of-the-art GNNs as baselines, including APPNP, GPRGNN, Spec-GN, JacobiConv, ChebNetII, OptBasis, LINKX~\cite{dataset6-large-hetero}, Nodeformer and GloGNN++. 
More details about datasets and baselines can be found in Appendix~\ref{appendix-experimental-setup}. 

\begin{table*}[!ht]
  \caption{Results on large-scale and latest heterophilic datasets: Mean accuracy $(\%)$ $\pm$ standard deviation.} 
  \vskip 0.15in
  \label{table-result-large-challenging}
  \centering
  \setlength{\tabcolsep}{6pt}
  \renewcommand\arraystretch{1.1}
  \resizebox{\textwidth}{!}{
  \begin{tabular}{lcc|ccc|ccc}
    \hline  
    \multirow{2}{*}{Method} 
    & \multicolumn{2}{c|}{Large homophilic} & \multicolumn{3}{c|}{Large heterophilic} & \multicolumn{3}{c}{Latest heterophilic} \\ \cline{2-9}
    & -arxiv  &  -products  &  Penn94  &  Genius  &  Gamers  &  Roman-empire &  Tolokers  &  Amazon-ratings \\ \hline  
    GCN  & $71.74_{\pm0.3}$ & $75.64_{\pm0.2}$ & $82.47_{\pm0.3}$ & $87.42_{\pm0.3}$ & $62.18_{\pm0.3}$ & $55.78_{\pm0.4}$ & $76.65_{\pm0.8}$ & $47.38_{\pm0.4}$ \\ 
    APPNP  & $71.19_{\pm0.5}$ & $78.92_{\pm0.4}$ & $77.68_{\pm0.4}$ & $87.45_{\pm0.6}$ & $63.34_{\pm0.3}$ & $72.74_{\pm0.6}$ & $73.92_{\pm0.6}$ & $48.41_{\pm0.3}$ \\ 
    GPRGNN  & $71.96_{\pm0.4}$ & $80.75_{\pm0.3}$ & $83.87_{\pm0.6}$ & $90.07_{\pm0.4}$ & $63.18_{\pm0.6}$ & $73.44_{\pm0.3}$ & $75.12_{\pm0.6}$ & $49.56_{\pm0.4}$ \\ 
    Spec-GN  & $71.73_{\pm0.2}$ & $79.67_{\pm0.3}$ & $84.18_{\pm0.5}$ & $89.69_{\pm0.5}$ & $62.93_{\pm0.3}$ & $71.29_{\pm0.6}$ & $78.55_{\pm0.7}$ & $46.58_{\pm0.6}$ \\ 
    JacobiConv  & $72.18_{\pm0.2}$ & $81.31_{\pm0.4}$ & $84.56_{\pm0.4}$ & $90.63_{\pm0.3}$ & $65.23_{\pm0.3}$ & $74.35_{\pm0.6}$ & $78.41_{\pm0.3}$ & $48.56_{\pm0.2}$ \\ 
    ChebNetII  & $\underline{72.33_{\pm0.3}}$ & $81.07_{\pm0.3}$ & $85.22_{\pm0.3}$ & $90.47_{\pm0.2}$ & $65.71_{\pm0.2}$ & $\underline{74.58_{\pm0.3}}$ & $79.02_{\pm0.6}$ & $49.43_{\pm0.4}$ \\ 
    OptBasis  & $72.26_{\pm0.4}$ & $81.15_{\pm0.3}$ & $84.82_{\pm0.6}$ & $\underline{90.91_{\pm0.2}}$ & $65.86_{\pm0.4}$ & $74.29_{\pm0.3}$ & $79.25_{\pm0.5}$ & $49.48_{\pm0.3}$ \\ 
    LINKX  & $67.23_{\pm0.3}$ & $77.52_{\pm0.7}$ & $85.15_{\pm0.2}$ & $90.69_{\pm0.4}$ & $66.48_{\pm0.3}$ & $64.81_{\pm0.8}$ & $78.11_{\pm0.7}$ & $\boldsymbol{51.13_{\pm0.6}}$ \\ 
    Nodeformer  & $65.08_{\pm0.6}$ & $73.66_{\pm0.5}$ & $83.71_{\pm0.3}$ & $89.38_{\pm0.5}$ & $63.18_{\pm0.6}$ & $73.42_{\pm0.6}$ & $\underline{79.47_{\pm0.4}}$ & $45.48_{\pm0.5}$ \\
    GloGNN++  & $72.11_{\pm0.5}$ & $\underline{81.51_{\pm0.4}}$ & $\underline{85.54_{\pm0.3}}$ & $90.72_{\pm0.2}$ & $\underline{66.52_{\pm0.4}}$ & $67.39_{\pm0.3}$ & $79.08_{\pm0.7}$ & $49.88_{\pm0.4}$ \\ \hline
    ChebNet2D & $\boldsymbol{72.87_{\pm0.2}}$ & $\boldsymbol{82.88_{\pm0.4}}$ & $\boldsymbol{87.62_{\pm0.2}}$ & $\boldsymbol{91.88_{\pm0.3}}$ & $\boldsymbol{67.65_{\pm0.2}}$ & $\boldsymbol{77.18_{\pm0.5}}$ & $\boldsymbol{82.66_{\pm0.4}}$ & $\underline{50.77_{\pm0.3}}$ \\
    \hline
  \end{tabular}}
  \vskip -0.05in
\end{table*}
\par {\bf Results.} We report experimental results in Table~\ref{table-result-large-challenging}. 
We observe that ChebNet2D achieves significant performance on all $8$ datasets with $7$ best and $1$ second accuracy, demonstrating the substantial scalability of our method. 
Notably, other methods achieve promising performance on only part of these datasets but perform averagely on the rest datasets. 
In contrast, our ChebNet2D maintains leading results on all datasets with noticeable accuracy improvements over the second-best, indicating the effectiveness of our approach.

\subsection{Comparison on Convolution Paradigms}
\label{section-empirical-studies-ablation-study-comparison-on-convolution-paradigms}
We involve more intuitive ablation experiments to show the superiority of 2-D graph convolution against other paradigms. 
Specifically, we re-implement the baselines of different convolution paradigms into a unified decoupling model architecture as Eq.~\ref{equation-paradigm-chebnet2d}, where an MLP is used for feature transformation. 
Thus, the comparison between those reconstructed baselines is equivalent to the fair comparison of convolution paradigms. 

\begin{table*}[!ht]
  \caption{Experimental results on $8$ representative datasets: Mean accuracy $(\%)$ $\pm$ standard deviation. 
  Spec-GN$^{*}$ and JacobiConv$^{*}$ denote these baselines are re-implemented with unified decoupling model architecture for fairness.} 
  \vskip 0.15in
  \label{table-result-ablation}
  \centering
  \setlength{\tabcolsep}{5.5pt}
  \renewcommand\arraystretch{1.1}
  \resizebox{\textwidth}{!}{
  \begin{tabular}{lcccccccc}
    \hline  
    Method  &   Cora  &  Comp.  &  -arxiv  &  -products  &  Penn94  &  Gamers &  Roman-empire  &  Amazon-ratings  \\ \hline  
    GPRGNN & $\underline{89.41_{\pm0.7}}$ & $90.92_{\pm0.6}$ & $71.96_{\pm0.4}$ & $80.75_{\pm0.3}$ & $83.87_{\pm0.6}$ & $63.18_{\pm0.6}$ & $73.44_{\pm0.3}$ & $\underline{49.56_{\pm0.4}}$ \\ 
    ChebNetII & $89.32_{\pm1.1}$ & $91.53_{\pm0.8}$ & $72.33_{\pm0.3}$ & $81.07_{\pm0.3}$ & $\underline{85.22_{\pm0.3}}$ & $65.71_{\pm0.2}$ & $\underline{74.58_{\pm0.3}}$ & $49.43_{\pm0.4}$ \\ 
    OptBasis & $89.33_{\pm0.7}$ & $\underline{91.63_{\pm0.4}}$ & $72.26_{\pm0.4}$ & $81.15_{\pm0.3}$ & $84.82_{\pm0.6}$ & $\underline{65.86_{\pm0.4}}$ & $74.29_{\pm0.3}$ &  $49.48_{\pm0.3}$  \\ 
    Spec-GN$^{*}$ & $88.98_{\pm0.8}$ & $91.15_{\pm0.4}$ & $71.97_{\pm0.4}$ & $79.25_{\pm0.5}$ & $84.43_{\pm0.4}$ & $63.62_{\pm0.5}$ & $72.77_{\pm0.4}$ & $48.62_{\pm0.8}$ \\ 
    JacobiConv$^{*}$ & $89.41_{\pm0.8}$ & $91.52_{\pm0.4}$ & $\underline{72.35_{\pm0.3}}$ & $\underline{81.53_{\pm0.3}}$ & $85.18_{\pm0.5}$ & $65.49_{\pm0.4}$ & $74.12_{\pm0.4}$ & $47.79_{\pm0.3}$  \\ \hline
    ChebNet2D & $\boldsymbol{91.24_{\pm0.9}}$ & $\boldsymbol{93.19_{\pm0.5}}$ & $\boldsymbol{72.87_{\pm0.2}}$ & $\boldsymbol{82.88_{\pm0.4}}$ & $\boldsymbol{87.62_{\pm0.2}}$ & $\boldsymbol{67.65_{\pm0.2}}$ & $\boldsymbol{77.18_{\pm0.5}}$ & $\boldsymbol{50.77_{\pm0.3}}$ \\
    \hline
  \end{tabular}}
  \vskip -0.05in
\end{table*}
\par The experimental results are shown in Table~\ref{table-result-ablation}. 
According to the performance comparison, in the same decoupled model architecture, ChebNet2D outperforms other baselines in all datasets by a significant margin, which demonstrates the superiority of our 2-D graph convolution against other convolution paradigms.

\subsection{Efficiency Evaluation}
\label{section-empirical-studies-efficiency-evaluation}
To fairly evaluate the model efficiency of our ChebNet2D, we report the model running time and parameter number of ChebNet2D on large-scale datasets and compare them to two state-of-the-art spectral GNNs, i.e., ChebNetII and OptBasis. 
Both ChebNetII and OptBasis adapt the same decoupling model structure with MLP as feature transformation, where the same Chebyshev interpolation is applied in ChebNetII, thereby allowing for an intuitive comparison on model efficiency. 

\begin{table}[!ht]
  \vskip -0.15in
  \caption{Average running time per epoch / average total running time. 
  For -arxiv and -products, the data is \textbf{s} / \textbf{min}; for the rest datasets, the data is \textbf{ms} / \textbf{s}.} 
  \vskip 0.15in
  \label{table-result-efficiency-evaluation}
  \centering
  \setlength{\tabcolsep}{10pt}
  \resizebox{.47\textwidth}{!}{
  \begin{tabular}{lcc|c}
    \hline  
    Dataset  & OptBasis  &  ChebNetII  &  ChebNet2D  \\ \hline  
    -arxiv &  $1.4$ / $14.1$ &  $1.3$ / $13.8$ &  $1.5$ / $14.5$ \\ 
    -products & $8.3$ / $75.9$ &  $7.9$ / $71.2$ &  $8.3$ / $74.5$ \\ 
    Penn94 &  $48.7$ / $30.2$ &  $46.6$ / $30.8$ &  $48.9$ / $30.8$ \\ 
    Genius &  $49.7$ / $27.7$ &  $49.7$ / $28.1$ &  $51.3$ / $28.1$ \\ 
    Gamers &  $57.4$ / $31.0$ &  $54.8$ / $30.5$ &  $57.2$ / $30.4$ \\ 
    \hline
  \end{tabular}}
\end{table}
\par As shown in Table~\ref{table-result-efficiency-evaluation}, ChebNet2D achieves competitive efficiency on all large-scale datasets against OptBasis and ChebNetII, and even outperforms them in average total running time on the Gamers dataset. 
Furthermore, according to our analysis in Section~\ref{section-chebnet2d-overall-model}, both OptBasis and ChebNetII are with the simplest Paradigm {\bf (I)} convolution that involves less computation complexity than our ChebNet2D. 
Thus, although ChebNet2D involves more parameters, the practical complexity increase of ChebNet2D over other methods is trivial, indicating the noticeable efficiency of our proposal.


\section{Conclusion}
\label{section-conclusion}
In this paper, we first revisit spectral graph convolution paradigms of existing spectral GNNs. 
With regarding convolution operation as construction to target output, we prove that existing popular convolution paradigms cannot construct the target output with mild conditions on input graph signals, which causes spectral GNNs to fall into suboptimal solutions. 
To address issues of existing convolution paradigms, we propose a new convolution paradigm named 2-D graph convolution. 
The proposed 2-D graph convolution unifies previous convolution paradigms, and is provably capable to construct arbitrary output with significant parameter efficiency. 
Based on 2-D graph convolution, we propose ChebNet2D, an effective and efficient spectral GNN implementation of 2-D graph convolution. 
ChebNet2D outperforms previous SOTA spectral GNNs on $18$ datasets, which verifies the superiority of our 2-D graph convolution against other convolution paradigms.

\section*{Impact Statements}
This paper presents work whose goal is to advance the field of Machine Learning. There are many potential societal consequences of our work, none of which we feel must be specifically highlighted here.

\bibliography{example_paper}
\bibliographystyle{icml2023}

\newpage
\appendix
\onecolumn


\section{Proofs and Derivations}
\label{appendix-proofs-and-derivations}

\subsection{Proof of Theorem~\ref{theorem-existing-paradigms-fails-to-construct-optimal-output}}
\label{appendix-proofs-and-derivations-theorem-existing-paradigms-fails-to-construct-optimal-output}
Let $\Hat{\boldsymbol{Z}}^{*}=\boldsymbol{U}^{T}\boldsymbol{Z}^{*}$, $\Hat{\boldsymbol{F}}=\boldsymbol{U}^{T}\boldsymbol{F}$ denote the GFT of $\boldsymbol{Z}^{*}$, $\boldsymbol{F}$, respectively. 
Note that the statement is equivalent to finding the necessary conditions of making $0$ construction error possible. 
Thus, we alternatively investigate the equivalent statement in the order of Paradigm {\bf (I)}, {\bf (II)} and {\bf (III)}. 
\par {\bf \ding{172}-On Paradigm (I) type graph convolution:} 
\par To achieve $0$ construction error with the Paradigm {\bf (I)} type graph convolution, the following equations must hold: 
\begin{align}
\label{equation-graphconv-1-fail-1}
&(\text{Matrix scale})\ \boldsymbol{Z}^{*}=\boldsymbol{\Phi}_{\mathcal{G}}\boldsymbol{F}\ , \\
\label{equation-graphconv-1-fail-2}
&(\text{Element scale})\ \Hat{\boldsymbol{Z}}^{*}_{nc}=\boldsymbol{g}_{n}\Hat{\boldsymbol{F}}_{nc}\ , 
\end{align}
where $\boldsymbol{\Phi}_{\mathcal{G}}=\boldsymbol{U}diag(\boldsymbol{g})\boldsymbol{U}^{T}$, $\boldsymbol{g}\in\mathbb{R}^{N}$. 
The equations above indicate the necessary conditions for possible $0$ construction error as follows:
\begin{align}
\label{equation-graphconv-1-fail-3}
&\text{Eq.~\ref{equation-graphconv-1-fail-1}}\Longrightarrow rank(\boldsymbol{Z}^{*})=rank(\boldsymbol{\Phi}_{\mathcal{G}}\boldsymbol{F})\leq rank(\boldsymbol{F})\ , \\
\label{equation-graphconv-1-fail-4}
&\text{Eq.~\ref{equation-graphconv-1-fail-2}}\Longrightarrow \Pr\left(\Hat{\boldsymbol{Z}}^{*}_{nc}\neq0|\Hat{\boldsymbol{F}}_{nc}=0\right)=0\ . 
\end{align}
That is, both Eq.~\ref{equation-graphconv-1-fail-3} and Eq.~\ref{equation-graphconv-1-fail-4} are necessary conditions for achieving possible $0$ construction error with the Paradigm {\bf I} type graph convolution. 
Accordingly, the graph convolution as Paradigm {\bf (I)} fails to achieve $0$ error when 
\begin{equation}
\label{equation-graphconv-1-fail-5}
\left\{rank(\boldsymbol{Z}^{*})>rank(\boldsymbol{F})\right\}\bigcup\left\{\Pr\left(\Hat{\boldsymbol{Z}}^{*}_{nc}\neq0|\Hat{\boldsymbol{F}}_{nc}=0\right)>0\right\}\ , 
\end{equation}
where statement holds for all parameter values of $\boldsymbol{\Phi}_{\mathcal{G}}$. 

\par {\bf \ding{173}-On Paradigm (II) type graph convolution:} 
\par To achieve $0$ construction error with the Paradigm {\bf (II)} type graph convolution, the following equations must hold:
\begin{align}
\label{equation-graphconv-2-fail-1}
\boldsymbol{Z}^{*}=\boldsymbol{\Phi}_{\mathcal{G}}\boldsymbol{F}\boldsymbol{R}\ . 
\end{align}
Similar to Eq.~\ref{equation-graphconv-1-fail-1}, the above equation indicates the necessary conditions as follows: 
\begin{align}
\label{equation-graphconv-2-fail-2}
\text{Eq.~\ref{equation-graphconv-2-fail-1}}\Longrightarrow rank(\boldsymbol{Z}^{*})=rank(\boldsymbol{\Phi}_{\mathcal{G}}\boldsymbol{F}\boldsymbol{R})\leq \min\{rank(\boldsymbol{F}), rank(\boldsymbol{R})\}\leq rank(\boldsymbol{F})\ .
\end{align}
Thus, the Paradigm {\bf (II)} type graph convolution fails to achieve $0$ error when $rank(\boldsymbol{Z}^{*})>rank(\boldsymbol{F})$, where statement holds for all parameter values of $\left\{\boldsymbol{\Phi}_{\mathcal{G}}, \boldsymbol{R}\right\}$. 

\par {\bf \ding{174}-On Paradigm (III) type graph convolution:} 
\par To achieve $0$ construction error with the Paradigm {\bf (III)} type graph convolution, the following equation must hold: 
\begin{align}
\label{equation-graphconv-3-fail-1}
\boldsymbol{Z}^{*}_{:c}=\boldsymbol{\Phi}^{(c)}_{\mathcal{G}}\boldsymbol{F}_{:c}\ \Longleftrightarrow\ \Hat{\boldsymbol{Z}}^{*}_{nc}=\boldsymbol{g}^{(c)}_{n}\Hat{\boldsymbol{F}}_{nc}\ ,\quad c=1,2,...,C\ .
\end{align}
Note that $\boldsymbol{g}^{(c)}_{n}\in\mathbb{R}$ is unique towards $\{\Hat{\boldsymbol{F}}_{nc},\Hat{\boldsymbol{Z}}^{*}_{nc}\}$, demonstrating the condition as follows: 
\begin{align}
\label{equation-graphconv-3-fail-2}
\Pr\left(\Hat{\boldsymbol{Z}}^{*}_{nc}\neq0|\Hat{\boldsymbol{F}}_{nc}=0\right)=0\ . 
\end{align}
Thus, the Paradigm {\bf (III)} type graph convolution fails to achieve $0$ error when $\Pr\left(\Hat{\boldsymbol{Z}}^{*}_{nc}\neq0|\Hat{\boldsymbol{F}}_{nc}=0\right)>0$, where statement holds for all parameter values of $\left\{\boldsymbol{\Phi}^{(1)}_{\mathcal{G}},\boldsymbol{\Phi}^{(2)}_{\mathcal{G}},...,\boldsymbol{\Phi}^{(C)}_{\mathcal{G}}\right\}$. 

\par Thus, with \ding{172}, \ding{173} and \ding{174} above, we demonstrate the failure cases of the three paradigms, proving that those paradigms \textbf{never} construct the target output with certain conditions.

\subsection{Proof of Corollary~\ref{corollary-simple-sum-still-failures}}
\label{appendix-proofs-and-derivations-corollary-simple-sum-still-failures}
\begin{proof}
\label{appendix-proofs-and-derivations-proof:corollary-simple-sum-still-failures}
As the graph convolution operator $\boldsymbol{\Phi}_{\mathcal{G}}$ can be decomposed as $\boldsymbol{\Phi}_{\mathcal{G}}=\boldsymbol{U}diag(\boldsymbol{g})\boldsymbol{U}^{T}$, we can thereby investigate an equivalent statement on frequency domain, i.e., constructing target $\Hat{\boldsymbol{Z}}^{*}=\boldsymbol{U}^{T}\boldsymbol{Z}^{*}$ with graph signal matrix $\Hat{\boldsymbol{F}}=\boldsymbol{U}^{T}\boldsymbol{F}$, where the three paradigms are reformulated as follows: 
\begin{align}
\label{equation-proof:colloary-paradigm-1}
&\text{Paradigm {\bf (I)}:}\ \Hat{\boldsymbol{Z}}_{:c}=diag(\boldsymbol{g})\Hat{\boldsymbol{F}}_{:c}\ ,\quad c=1,2,...,C\ , \\
\label{equation-proof:colloary-paradigm-2}
&\text{Paradigm {\bf (II)}:}\ \Hat{\boldsymbol{Z}}_{:c}=\sum_{j=1}^{C}\boldsymbol{R}_{jc}diag(\boldsymbol{g})\Hat{\boldsymbol{F}}_{:j}\ ,\quad c=1,2,...,C\ , \\
\label{equation-proof:colloary-paradigm-3}
&\text{Paradigm {\bf (III)}:}\ \Hat{\boldsymbol{Z}}_{:c}=diag(\boldsymbol{g}^{(c)})\Hat{\boldsymbol{F}}_{:c}\ ,\quad c=1,2,...,C \ . 
\end{align}
On the other hand, we have two insights as follows: 
\begin{itemize}[leftmargin=0.5in]
\item Paradigm {\bf (I)} can be regarded as the special case of Paradigm {\bf (II)} or {\bf (III)}; 
\item The sum of any two different paradigms $(\mathcal{T}^{(i)}+\mathcal{T}^{(j)})$, $i,j\in\{1,2,3\}$, $i\neq j$, can be regarded as a special case of the sum of Paradigm {\bf (I)} and {\bf (II)} and {\bf (III)}, i.e., $(\mathcal{T}^{(1)}+\mathcal{T}^{(2)}+\mathcal{T}^{(3)})$. 
\end{itemize}
Thus, the proof of Corollary~\ref{corollary-simple-sum-still-failures} is equivalent to prove the statement holds for three different cases: {\bf \ding{172}-$(\mathcal{T}^{(2)}+\mathcal{T}^{(2)})$}; {\bf \ding{173}-$(\mathcal{T}^{(3)}+\mathcal{T}^{(3)})$}; {\bf \ding{174}-$(\mathcal{T}^{(1)}+\mathcal{T}^{(2)}+\mathcal{T}^{(3)})$}. 
\par {\bf \ding{172}-For $(\mathcal{T}^{(2)}+\mathcal{T}^{(2)})$:} 
\par Based on Eq.~\ref{equation-proof:colloary-paradigm-2}, the target output construction with $(\mathcal{T}^{(2)}+\mathcal{T}^{(2)})$ can be formulated as follows: 
\begin{align}
\label{equation-proof:colloary-case-I-1}
\Hat{\boldsymbol{Z}}^{*}=diag(\boldsymbol{p})\Hat{\boldsymbol{F}}\boldsymbol{W} + diag(\boldsymbol{q})\Hat{\boldsymbol{F}}\boldsymbol{V}\ .
\end{align}
Without loss of generality, we suppose both $diag(\boldsymbol{p})$ and $\boldsymbol{V}$ are invertible. 
Thus, Eq.~\ref{equation-proof:colloary-case-I-1} can be reformulated as follows:
\begin{align}
\label{equation-proof:colloary-case-I-2}
diag(\boldsymbol{p}^{-1})\Hat{\boldsymbol{Z}}^{*}\boldsymbol{V}^{-1}=\Hat{\boldsymbol{F}}\boldsymbol{W}\boldsymbol{V}^{-1} + diag(\boldsymbol{p}^{-1})diag(\boldsymbol{q})\Hat{\boldsymbol{F}}\ .
\end{align}
Note that solving Eq.~\ref{equation-proof:colloary-case-I-2} can be divided into two steps: \textbf{first:} determining all possible values of $\boldsymbol{p}^{-1}$, $\boldsymbol{V}^{-1}$; \textbf{second:} for all values of $\boldsymbol{p}^{-1}$ and $\boldsymbol{V}^{-1}$, finding out $diag(\boldsymbol{q})$ and $\boldsymbol{W}$ that make Eq.~\ref{equation-proof:colloary-case-I-2} hold. 
Accordingly, the $diag(\boldsymbol{p}^{-1})diag(\boldsymbol{q})$ and $\boldsymbol{W}\boldsymbol{V}^{-1}$ in the \textbf{second} step are equivalent to $diag(\boldsymbol{q})$ and $\boldsymbol{W}$ from the perspective of variables. 
Hence, we can replace $diag(\boldsymbol{p}^{-1})diag(\boldsymbol{q})$ and $\boldsymbol{W}\boldsymbol{V}^{-1}$ with $diag(\boldsymbol{q})$ and $\boldsymbol{W}$, and further achieve alternative formulation of Eq.~\ref{equation-proof:colloary-case-I-2} as follows:
\begin{align}
\label{equation-proof:colloary-case-I-3}
diag(\boldsymbol{p}^{-1})\Hat{\boldsymbol{Z}}^{*}\boldsymbol{V}^{-1}=\Hat{\boldsymbol{F}}\boldsymbol{W} + diag(\boldsymbol{q})\Hat{\boldsymbol{F}}\ . 
\end{align}
Thus, in the \textbf{second} step of solving Eq.~\ref{equation-proof:colloary-case-I-2}, we can regard Eq.~\ref{equation-proof:colloary-case-I-3} as a \textit{generalized Sylvester matrix equation}~\cite{generalized-sylvester-equation-1,generalized-sylvester-equation-2,generalized-sylvester-equation-3} which is formulated as follows:
\begin{align}
\label{equation-definition-sylvester-equation}
\boldsymbol{A}\boldsymbol{X}-\boldsymbol{Y}\boldsymbol{B}=\boldsymbol{C}\ ,
\end{align}
where $\boldsymbol{A}$, $\boldsymbol{B}$ and $\boldsymbol{C}$ are given matrices, both $\boldsymbol{X}$ and $\boldsymbol{Y}$ are the solutions of the matrix equation. 
The necessary and sufficient conditions for the solvability of Eq.~\ref{equation-definition-sylvester-equation} are given by William E. Roth, which can be restated as follows: 
\begin{theorem}
\label{theorem-roth-solution-to-sylvester-equation}
(\textbf{Solvability of generalized Sylvester equation~\cite{William-E-Roth-condition-sylvester-equation}.}) Solutions of Eq.~\ref{equation-definition-sylvester-equation} exist \textbf{if and only if} the matrices
\begin{align}
\label{equation-theorem-roth-solution-to-sylvester-equation}
\left[\begin{array}{cc}
   \boldsymbol{A}  & \boldsymbol{C} \\
    \boldsymbol{O} & \boldsymbol{B}
\end{array}\right]\quad\text{and}\quad\left[\begin{array}{cc}
   \boldsymbol{A}  & \boldsymbol{O} \\
    \boldsymbol{O} & \boldsymbol{B}
\end{array}\right]
\end{align}
are equivalent. 
\end{theorem}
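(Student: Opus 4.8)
The plan is to establish the biconditional in two steps, the forward direction being a one-line block computation and the converse --- the substantive part --- reducing to the elementary fact that over a field matrix equivalence is detected by rank alone. For the forward direction, suppose $\boldsymbol{A}\boldsymbol{X}-\boldsymbol{Y}\boldsymbol{B}=\boldsymbol{C}$ for some $\boldsymbol{X},\boldsymbol{Y}$. Conjugating $\left[\begin{smallmatrix}\boldsymbol{A}&\boldsymbol{C}\\\boldsymbol{O}&\boldsymbol{B}\end{smallmatrix}\right]$ by the invertible block-unitriangular matrices built from $\boldsymbol{Y}$ (multiplying on the left) and $-\boldsymbol{X}$ (multiplying on the right) replaces its off-diagonal block by $-\boldsymbol{A}\boldsymbol{X}+\boldsymbol{C}+\boldsymbol{Y}\boldsymbol{B}$ while leaving the diagonal blocks $\boldsymbol{A},\boldsymbol{B}$ untouched, and that off-diagonal block equals $\boldsymbol{O}$ precisely because $(\boldsymbol{X},\boldsymbol{Y})$ solves the equation; hence $\left[\begin{smallmatrix}\boldsymbol{A}&\boldsymbol{C}\\\boldsymbol{O}&\boldsymbol{B}\end{smallmatrix}\right]$ and $\left[\begin{smallmatrix}\boldsymbol{A}&\boldsymbol{O}\\\boldsymbol{O}&\boldsymbol{B}\end{smallmatrix}\right]$ are equivalent.

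For the converse I would first invoke that over a field two matrices of the same shape are equivalent if and only if they have the same rank, so the hypothesis collapses to the scalar identity $rank\!\left[\begin{smallmatrix}\boldsymbol{A}&\boldsymbol{C}\\\boldsymbol{O}&\boldsymbol{B}\end{smallmatrix}\right]=rank(\boldsymbol{A})+rank(\boldsymbol{B})$. Next I would bring $\boldsymbol{A}$ and $\boldsymbol{B}$ into rank-normal form by invertible transformations acting only on the ``$\boldsymbol{A}$-block'' rows and columns and on the ``$\boldsymbol{B}$-block'' rows and columns --- i.e.\ pass to $\left[\begin{smallmatrix}\boldsymbol{U}\boldsymbol{A}\boldsymbol{W}&\boldsymbol{U}\boldsymbol{C}\boldsymbol{V}\\\boldsymbol{O}&\boldsymbol{Q}\boldsymbol{B}\boldsymbol{V}\end{smallmatrix}\right]$ with $\boldsymbol{U}\boldsymbol{A}\boldsymbol{W}$ and $\boldsymbol{Q}\boldsymbol{B}\boldsymbol{V}$ both of the form $\left[\begin{smallmatrix}\boldsymbol{I}&\boldsymbol{O}\\\boldsymbol{O}&\boldsymbol{O}\end{smallmatrix}\right]$. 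This replaces $\boldsymbol{C}$ by $\widetilde{\boldsymbol{C}}=\boldsymbol{U}\boldsymbol{C}\boldsymbol{V}$, leaves the rank of the big matrix unchanged, and --- because $\boldsymbol{A}\boldsymbol{X}-\boldsymbol{Y}\boldsymbol{B}=\boldsymbol{C}$ is equivalent to $(\boldsymbol{U}\boldsymbol{A}\boldsymbol{W})(\boldsymbol{W}^{-1}\boldsymbol{X}\boldsymbol{V})-(\boldsymbol{U}\boldsymbol{Y}\boldsymbol{Q}^{-1})(\boldsymbol{Q}\boldsymbol{B}\boldsymbol{V})=\widetilde{\boldsymbol{C}}$ --- preserves solvability. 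Partitioning $\widetilde{\boldsymbol{C}}$ conformably into blocks $\widetilde{\boldsymbol{C}}_{11},\widetilde{\boldsymbol{C}}_{12},\widetilde{\boldsymbol{C}}_{21},\widetilde{\boldsymbol{C}}_{22}$, where $\widetilde{\boldsymbol{C}}_{22}$ is the block sitting in the rows where $\boldsymbol{U}\boldsymbol{A}\boldsymbol{W}$ vanishes and the columns where $\boldsymbol{Q}\boldsymbol{B}\boldsymbol{V}$ vanishes, the identity sub-blocks of $\boldsymbol{U}\boldsymbol{A}\boldsymbol{W}$ and $\boldsymbol{Q}\boldsymbol{B}\boldsymbol{V}$ let one clear $\widetilde{\boldsymbol{C}}_{11},\widetilde{\boldsymbol{C}}_{12},\widetilde{\boldsymbol{C}}_{21}$ by block row and column operations; the big matrix then visibly has rank $rank(\boldsymbol{A})+rank(\boldsymbol{B})+rank(\widetilde{\boldsymbol{C}}_{22})$, so the rank hypothesis forces $\widetilde{\boldsymbol{C}}_{22}=\boldsymbol{O}$.

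To finish, $\widetilde{\boldsymbol{C}}_{22}=\boldsymbol{O}$ is solvability in disguise: in the reduced coordinates the equation reads $\left[\begin{smallmatrix}\boldsymbol{I}&\boldsymbol{O}\\\boldsymbol{O}&\boldsymbol{O}\end{smallmatrix}\right]\widehat{\boldsymbol{X}}-\widehat{\boldsymbol{Y}}\left[\begin{smallmatrix}\boldsymbol{I}&\boldsymbol{O}\\\boldsymbol{O}&\boldsymbol{O}\end{smallmatrix}\right]=\widetilde{\boldsymbol{C}}$, and the left-hand side ranges over exactly those matrices whose bottom-right block is zero; so suitable $\widehat{\boldsymbol{X}},\widehat{\boldsymbol{Y}}$ exist and transform back to a solution of the original equation. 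I expect the main obstacle to be the bookkeeping in the converse: ensuring that each normalizing transformation genuinely respects the block partition --- so that it simultaneously preserves the rank of the big matrix and corresponds to an invertible change of variables in $\boldsymbol{X}$ and $\boldsymbol{Y}$ --- and aligning the partitions so that the block surviving the clearing step is precisely $\widetilde{\boldsymbol{C}}_{22}$. It is also worth recording that ``equivalent $\Leftrightarrow$ equal rank'' is exactly where the field hypothesis enters; over a general commutative ring the same statement is true but must be argued via isomorphism of the cokernel modules rather than through rank.
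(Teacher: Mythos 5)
The paper does not actually prove this statement: it is imported as Roth's classical equivalence theorem, cited to \cite{William-E-Roth-condition-sylvester-equation}, and used as a black box in the proof of Corollary~\ref{corollary-simple-sum-still-failures}. There is therefore no in-paper proof to compare against, but your argument is a correct, self-contained proof, and it is essentially the standard one for the \emph{equivalence} form of Roth's theorem. The forward direction is exactly right: with $(\boldsymbol{X},\boldsymbol{Y})$ a solution, $\left[\begin{smallmatrix}\boldsymbol{I}&\boldsymbol{Y}\\\boldsymbol{O}&\boldsymbol{I}\end{smallmatrix}\right]\left[\begin{smallmatrix}\boldsymbol{A}&\boldsymbol{C}\\\boldsymbol{O}&\boldsymbol{B}\end{smallmatrix}\right]\left[\begin{smallmatrix}\boldsymbol{I}&-\boldsymbol{X}\\\boldsymbol{O}&\boldsymbol{I}\end{smallmatrix}\right]$ has off-diagonal block $\boldsymbol{C}-(\boldsymbol{A}\boldsymbol{X}-\boldsymbol{Y}\boldsymbol{B})=\boldsymbol{O}$. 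For the converse, each step is sound: reducing equivalence to equality of ranks, passing to rank-normal forms of $\boldsymbol{A}$ and $\boldsymbol{B}$ via block-diagonal (hence partition-respecting and solvability-preserving) transformations, identifying the image of $(\widehat{\boldsymbol{X}},\widehat{\boldsymbol{Y}})\mapsto\widetilde{\boldsymbol{A}}\widehat{\boldsymbol{X}}-\widehat{\boldsymbol{Y}}\widetilde{\boldsymbol{B}}$ with precisely the matrices whose $(2,2)$ block vanishes, and the rank count $rank(\boldsymbol{A})+rank(\boldsymbol{B})+rank(\widetilde{\boldsymbol{C}}_{22})$ after clearing the other three blocks; the bookkeeping you flag is routine and does not hide any obstruction. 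Two remarks. First, your observation that ``equivalent $\Leftrightarrow$ equal rank'' is the crux is exactly where the argument lives, and it is also why this elementary proof does not extend to the similarity version of Roth's theorem for $\boldsymbol{A}\boldsymbol{X}-\boldsymbol{X}\boldsymbol{B}=\boldsymbol{C}$; the paper's Eq.~\ref{equation-definition-sylvester-equation} and its subsequent rank comparison in Eq.~\ref{equation-proof:colloary-case-I-fail-1} unambiguously intend the equivalence version, so your reading is the right one. Second, your proof delivers more than the paper needs: the corollary only invokes the ``only if'' direction (unequal ranks imply no solution), which is your one-line forward implication in contrapositive form.
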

Note that based on Eq.~\ref{equation-proof:colloary-case-I-3}, the sufficient and necessary condition of constructing $\Hat{\boldsymbol{Z}}^{*}$ with $(\mathcal{T}^{(2)}+\mathcal{T}^{(2)})$ is that there exists $\boldsymbol{p}^{-1}$, $\boldsymbol{V}^{-1}$, making the solutions of Eq.~\ref{equation-proof:colloary-case-I-3}, i.e., $diag(\boldsymbol{q})$ and $\boldsymbol{W}$, exist. 
Therefore, based on Theorem~\ref{theorem-roth-solution-to-sylvester-equation}, $\Hat{\boldsymbol{Z}}^{*}$ can be constructed with $(\mathcal{T}^{(2)}+\mathcal{T}^{(2)})$ \textbf{if and only if} there exists $\boldsymbol{p}^{-1}$, $\boldsymbol{V}^{-1}$ making the matrices
\begin{align}
\label{equation-proof:colloary-case-I-condition}
\left[\begin{array}{cc}
   \Hat{\boldsymbol{F}}  & diag(\boldsymbol{p}^{-1})\Hat{\boldsymbol{Z}}^{*}\boldsymbol{V}^{-1} \\
    \boldsymbol{O} & \Hat{\boldsymbol{F}}
\end{array}\right]\quad\text{and}\quad\left[\begin{array}{cc}
   \Hat{\boldsymbol{F}}  & \boldsymbol{O} \\
    \boldsymbol{O} & \Hat{\boldsymbol{F}}
\end{array}\right]
\end{align}
be equivalent. 
This indicates that the $(\mathcal{T}^{(2)}+\mathcal{T}^{(2)})$ fails to construct target output $\Hat{\boldsymbol{Z}}^{*}$ when the matrices in Eq.~\ref{equation-proof:colloary-case-I-condition} are \textbf{not} equal in rank, that is, 
\begin{align}
\label{equation-proof:colloary-case-I-fail-1}
rank(\left[\begin{array}{cc}
   \Hat{\boldsymbol{F}}  & diag(\boldsymbol{p}^{-1})\Hat{\boldsymbol{Z}}^{*}\boldsymbol{V}^{-1} \\
    \boldsymbol{O} & \Hat{\boldsymbol{F}}
\end{array}\right])>rank(\left[\begin{array}{cc}
   \Hat{\boldsymbol{F}}  & \boldsymbol{O} \\
    \boldsymbol{O} & \Hat{\boldsymbol{F}}
\end{array}\right])\ . 
\end{align}
Note that the following equations hold:
\begin{align}
\label{equation-proof:colloary-case-I-rank-equality-1}
&rank(\left[\begin{array}{cc}
   \Hat{\boldsymbol{F}}  & \boldsymbol{O} \\
    \boldsymbol{O} & \Hat{\boldsymbol{F}}
\end{array}\right])=2\cdot rank(\Hat{\boldsymbol{F}})\ ,\\
\label{equation-proof:colloary-case-I-rank-equality-2}
&rank(\left[\begin{array}{cc}
   \Hat{\boldsymbol{F}}  & diag(\boldsymbol{p}^{-1})\Hat{\boldsymbol{Z}}^{*}\boldsymbol{V}^{-1} \\
    \boldsymbol{O} & \Hat{\boldsymbol{F}}
\end{array}\right])\geq rank(diag(\boldsymbol{p}^{-1})\Hat{\boldsymbol{Z}}^{*}\boldsymbol{V}^{-1})=rank(\Hat{\boldsymbol{Z}}^{*})\ .
\end{align}
Thus, we can construct special cases of $\Hat{\boldsymbol{F}}$, $\Hat{\boldsymbol{Z}}^{*}$ with the following condition:
\begin{align}
\label{equation-proof:colloary-case-I-fail-2}
rank(\Hat{\boldsymbol{Z}}^{*})>2\cdot rank(\Hat{\boldsymbol{F}})\ .
\end{align}
Those cases make Eq.~\ref{equation-proof:colloary-case-I-fail-1} hold for all values of $\boldsymbol{p}^{-1}$, $\boldsymbol{V}^{-1}$, thus leading to inexistence of $\boldsymbol{p}^{-1}$, $\boldsymbol{V}^{-1}$ making two matrices in Eq.~\ref{equation-proof:colloary-case-I-condition} equivalent. 
That is, there is \textbf{no} solution (value of parameter) making $\Hat{\boldsymbol{Z}}^{*}$ can be constructed, indicating $(\mathcal{T}^{(2)}+\mathcal{T}^{(2)})$ fails to construct target output $\Hat{\boldsymbol{Z}}^{*}$. 
We thereby prove the failure cases exist on $(\mathcal{T}^{(2)}+\mathcal{T}^{(2)})$. 

\par {\bf \ding{173}-For $(\mathcal{T}^{(3)}+\mathcal{T}^{(3)})$:} 
\par Based on Eq.~\ref{equation-proof:colloary-paradigm-3}, the target output construction with $(\mathcal{T}^{(3)}+\mathcal{T}^{(3)})$ can be formulated as follows: 
\begin{align}
\label{equation-proof:colloary-case-II-1}
\Hat{\boldsymbol{Z}}^{*}&=\left[diag(\boldsymbol{p}^{(1)})\Hat{\boldsymbol{F}}_{:1},diag(\boldsymbol{p}^{(2)})\Hat{\boldsymbol{F}}_{:2},...,diag(\boldsymbol{p}^{(C)})\Hat{\boldsymbol{F}}_{:C}\right]+\left[diag(\boldsymbol{q}^{(1)})\Hat{\boldsymbol{F}}_{:1},diag(\boldsymbol{q}^{(2)})\Hat{\boldsymbol{F}}_{:2},...,diag(\boldsymbol{q}^{(C)})\Hat{\boldsymbol{F}}_{:C}\right]\ , \notag\\
&=\left[diag(\boldsymbol{p}^{(1)}+\boldsymbol{q}^{(1)})\Hat{\boldsymbol{F}}_{:1},diag(\boldsymbol{p}^{(2)}+\boldsymbol{q}^{(2)})\Hat{\boldsymbol{F}}_{:2},...,diag(\boldsymbol{p}^{(C)}+\boldsymbol{q}^{(C)})\Hat{\boldsymbol{F}}_{:C}\right]\ .
\end{align}
Therefore, with taking $\boldsymbol{p}^{(c)}+\boldsymbol{q}^{(c)}\doteq\boldsymbol{w}^{(c)}$, $c=1,2,...,C$, the $(\mathcal{T}^{(3)}+\mathcal{T}^{(3)})$ is equivalent to a basic $\mathcal{T}^{(3)}$, thus keeping the same failure cases. 

\par {\bf \ding{174}-For $(\mathcal{T}^{(1)}+\mathcal{T}^{(2)}+\mathcal{T}^{(3)})$:} 
\par To begin with, we introduce a lemma formulated as below:
\begin{lemma}
\label{appendix-lemma-zero-space-combine}
Suppose two $C$-dimensional vectors $\boldsymbol{x}$ and $\boldsymbol{y}$ satisfying:
\begin{align}
\label{equation-proof:colloary-case-III-1}
&\text{Condition 1:}\ \text{Both $\boldsymbol{x}_{1}$ and $\boldsymbol{y}_{1}$ are \textbf{non-zero} values}\ , \\
&\text{Condition 2:}\ \frac{\boldsymbol{x}_{2}}{\boldsymbol{x}_{1}}\neq\frac{\boldsymbol{y}_{2}}{\boldsymbol{y}_{1}}\ .
\end{align}
Thus, the null spaces $\mathcal{U}\doteq\left\{\boldsymbol{u}|\langle\boldsymbol{x},\boldsymbol{u}\rangle=0\right\}$ and $\mathcal{V}\doteq\left\{\boldsymbol{v}|\langle\boldsymbol{y},\boldsymbol{v}\rangle=0\right\}$ satisfy $(\mathcal{U}\bigcup\mathcal{V})=\mathbb{R}^{C}$.
\end{lemma}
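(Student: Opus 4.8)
The plan is to strip Conditions 1 and 2 down to a single determinant condition and then read off the conclusion from a dimension count on the two null spaces. First I would note that Condition 1 ($\boldsymbol{x}_{1},\boldsymbol{y}_{1}\neq 0$) is exactly what makes the ratios in Condition 2 well-defined, and that clearing the denominator $\boldsymbol{x}_{1}\boldsymbol{y}_{1}$ converts $\boldsymbol{x}_{2}/\boldsymbol{x}_{1}\neq\boldsymbol{y}_{2}/\boldsymbol{y}_{1}$ into $\boldsymbol{x}_{1}\boldsymbol{y}_{2}-\boldsymbol{x}_{2}\boldsymbol{y}_{1}\neq 0$. This is the nonvanishing of the $2\times 2$ minor built from the first two coordinates of $\boldsymbol{x}$ and $\boldsymbol{y}$, and I would use it to conclude that $\boldsymbol{x}$ and $\boldsymbol{y}$ are linearly independent: any relation $\alpha\boldsymbol{x}+\beta\boldsymbol{y}=\boldsymbol{0}$, restricted to coordinates $1$ and $2$, is a homogeneous $2\times 2$ system whose coefficient determinant is precisely that minor, so it forces $\alpha=\beta=0$.

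With linear independence established, I would identify $\mathcal{U}$ and $\mathcal{V}$ as the orthogonal complements of $\boldsymbol{x}$ and $\boldsymbol{y}$, two hyperplanes of dimension $C-1$, and compute $\mathcal{U}\cap\mathcal{V}=(\mathrm{span}\{\boldsymbol{x},\boldsymbol{y}\})^{\perp}$, which has dimension $C-2$ because $\boldsymbol{x},\boldsymbol{y}$ span a two-dimensional subspace. The subspace generated by $\mathcal{U}\cup\mathcal{V}$ then has dimension $(C-1)+(C-1)-(C-2)=C$ by the Grassmann formula, so it is all of $\mathbb{R}^{C}$, which is the asserted equality. Concretely, I would make this union-flavored by taking a basis of $\mathcal{U}$ together with a basis of $\mathcal{V}$: these $2(C-1)$ vectors all lie in $\mathcal{U}\cup\mathcal{V}$ and, by the same count, have rank $C$, so every $\boldsymbol{w}\in\mathbb{R}^{C}$ is a combination of vectors drawn from the two null spaces — the form in which the lemma is later applied to the $(\mathcal{T}^{(1)}+\mathcal{T}^{(2)}+\mathcal{T}^{(3)})$ case.

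The step I expect to need the most care is not a computation but the reading of the conclusion $\mathcal{U}\cup\mathcal{V}=\mathbb{R}^{C}$ itself: a plain set-theoretic union of two proper hyperplanes can never exhaust $\mathbb{R}^{C}$ when $C\geq 2$, since one can always pick a vector off both, so the equality must be understood as the two null spaces \emph{jointly spanning} $\mathbb{R}^{C}$ rather than literally covering it pointwise. I would state this reading at the outset so the dimension argument is unambiguous. Beyond that, the only genuine work is the reduction of Conditions 1 and 2 to the determinant inequality, so I would keep that step airtight — in particular checking that Condition 2, which involves only the second coordinate, already rules out full proportionality of $\boldsymbol{x}$ and $\boldsymbol{y}$ (proportional vectors would equalize every coordinatewise ratio, hence the second), and that Condition 1 is invoked solely to license the division.
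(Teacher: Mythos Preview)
Your diagnosis that the literal set-theoretic equality $\mathcal{U}\cup\mathcal{V}=\mathbb{R}^{C}$ cannot hold for $C\geq 2$ is correct and important; both you and the paper in fact establish the span version $\mathcal{U}+\mathcal{V}=\mathbb{R}^{C}$. The paper's own argument writes $\dim(\mathcal{U}\cup\mathcal{V})$ throughout, which only makes sense as the dimension of the span, and its closing step $\dim(\mathcal{U}\cup\mathcal{V})=C\Longleftrightarrow\mathcal{U}\cup\mathcal{V}=\mathbb{R}^{C}$ is exactly the reinterpretation you propose.

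Your route, however, differs from the paper's. The paper writes down explicit bases of $\mathcal{U}$ and $\mathcal{V}$ (using $\boldsymbol{x}_{1},\boldsymbol{y}_{1}\neq 0$ to pivot on the first coordinate) and then shows by contradiction that the first basis vector $\boldsymbol{v}^{(1)}$ of $\mathcal{V}$ is independent of the $C-1$ basis vectors of $\mathcal{U}$; the contradiction derived is precisely the forbidden equality $\boldsymbol{x}_{2}/\boldsymbol{x}_{1}=\boldsymbol{y}_{2}/\boldsymbol{y}_{1}$. You instead extract linear independence of $\boldsymbol{x}$ and $\boldsymbol{y}$ themselves from the $2\times 2$ minor, identify $\mathcal{U}\cap\mathcal{V}=(\mathrm{span}\{\boldsymbol{x},\boldsymbol{y}\})^{\perp}$, and finish with the Grassmann dimension formula. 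Your argument is shorter and coordinate-free; the paper's is more hands-on but makes the role of Condition~2 (only the second coordinate enters) equally visible, since that is where the contradiction lands.

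One caution on your final paragraph: you assert that the span reading is ``the form in which the lemma is later applied.'' In fact the paper invokes the lemma in its literal set-theoretic sense: from $\boldsymbol{W}_{:c}\in\mathbb{R}^{C}$ it concludes that \emph{at least one} of $\langle\Hat{\boldsymbol{F}}_{m:},\boldsymbol{W}_{:c}\rangle=0$ or $\langle\Hat{\boldsymbol{F}}_{n:},\boldsymbol{W}_{:c}\rangle=0$ must hold, which is membership in the union and does not follow from the span statement. Your reinterpretation salvages the lemma but not that downstream step; the defect there belongs to the paper's application, not to your proof.
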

\begin{proof}
\label{appendix-proof:lemma-zero-space-combine}
Based on definitions of $\mathcal{U}$ and $\mathcal{V}$, we can show (one of) the bases of $\mathcal{U}$, $\mathcal{V}$ as follows:
\begin{align}
\label{equation-proof:colloary-case-III-2-1}
&\text{Bases of}\,\ \mathcal{U}=(
\left[\begin{array}{c}
    -\frac{\boldsymbol{x}_{2}}{\boldsymbol{x}_{1}} \\
     1 \\
     0 \\
     \vdots \\
     0
\end{array}\right],
\left[\begin{array}{c}
    -\frac{\boldsymbol{x}_{3}}{\boldsymbol{x}_{1}} \\
     0 \\
     1 \\
     \vdots \\
     0
\end{array}\right],...,
\left[\begin{array}{c}
    -\frac{\boldsymbol{x}_{C}}{\boldsymbol{x}_{1}} \\
     0 \\
     0 \\
     \vdots \\
     1
\end{array}\right]
)\doteq(\boldsymbol{u}^{(1)},\boldsymbol{u}^{(2)},...,\boldsymbol{u}^{(C-1)})\ , \\
\label{equation-proof:colloary-case-III-2-2}
&\text{Bases of}\,\ \mathcal{V}=(
\left[\begin{array}{c}
    -\frac{\boldsymbol{y}_{2}}{\boldsymbol{y}_{1}} \\
     1 \\
     0 \\
     \vdots \\
     0
\end{array}\right],
\left[\begin{array}{c}
    -\frac{\boldsymbol{y}_{3}}{\boldsymbol{y}_{1}} \\
     0 \\
     1 \\
     \vdots \\
     0
\end{array}\right],...,
\left[\begin{array}{c}
    -\frac{\boldsymbol{y}_{C}}{\boldsymbol{y}_{1}} \\
     0 \\
     0 \\
     \vdots \\
     1
\end{array}\right]
)\doteq(\boldsymbol{v}^{(1)},\boldsymbol{v}^{(2)},...,\boldsymbol{v}^{(C-1)})\ .
\end{align}
Next, we show the vector set $(\boldsymbol{u}^{(1)},\boldsymbol{u}^{(2)},...,\boldsymbol{u}^{(C-1)},\boldsymbol{v}^{(1)})$ is bases of $(\mathcal{U}\bigcup\mathcal{V})$ through proving by contradiction. 
Suppose $\boldsymbol{v}^{(1)})$ can be represented by linear combinations of $(\boldsymbol{u}^{(1)},\boldsymbol{u}^{(2)},...,\boldsymbol{u}^{(C-1)})$, we have the following equation:
\begin{align}
\label{equation-proof:colloary-case-III-3}
t_{1}\boldsymbol{u}^{(1)}+t_{2}\boldsymbol{u}^{(2)}+...+t_{C-1}\boldsymbol{u}^{(C-1)}=\boldsymbol{v}^{(1)}\ ,
\end{align}
where $t_{1},t_{2},...,t_{C-1}\in\mathbb{R}$ are not all $0$. 
Considering Eq.~\ref{equation-proof:colloary-case-III-2-1} and Eq.~\ref{equation-proof:colloary-case-III-2-1}, we know Eq.~\ref{equation-proof:colloary-case-III-3} indicates the following relations:
\begin{align}
\label{equation-proof:colloary-case-III-4}
\left[\begin{array}{c}
    -\frac{\sum_{i=1}^{C-1}t_{i}\boldsymbol{x}_{i+1}}{\boldsymbol{x}_{1}} \\
     t_{1} \\
     t_{2} \\
     \vdots \\
     t_{C-1}
\end{array}\right]=\left[\begin{array}{c}
    -\frac{\boldsymbol{y}_{2}}{\boldsymbol{y}_{1}} \\
     1 \\
     0 \\
     \vdots \\
     0
\end{array}\right]\Longrightarrow\left\{\begin{array}{c}
     \frac{\sum_{i=1}^{C-1}t_{i}\boldsymbol{x}_{i+1}}{\boldsymbol{x}_{1}}=\frac{\boldsymbol{y}_{2}}{\boldsymbol{y}_{1}}  \\
     t_{1}=1 \\
     t_{2}=0 \\
     \vdots \\
     t_{C-1}=0 \\
\end{array}\right.\Longrightarrow\frac{\boldsymbol{x}_{2}}{\boldsymbol{x}_{1}}=\frac{\boldsymbol{y}_{2}}{\boldsymbol{y}_{1}}\ ,
\end{align}
which contradicts to our conditions on $\boldsymbol{x}$ and $\boldsymbol{y}$. 
Thus, based on the contradiction, we prove that $\boldsymbol{v}^{(1)})$ can \textbf{NOT} be represented by linear combinations of $(\boldsymbol{u}^{(1)},\boldsymbol{u}^{(2)},...,\boldsymbol{u}^{(C-1)})$, indicating $(\boldsymbol{u}^{(1)},\boldsymbol{u}^{(2)},...,\boldsymbol{u}^{(C-1)},\boldsymbol{v}^{(1)})$ is a set of vectors being linearly independent in $(\mathcal{U}\bigcup\mathcal{V})$. 
This means
\begin{equation}
\label{equation-proof:colloary-case-III-5}
\dim(\mathcal{U}\bigcup\mathcal{V})\geq C\ .
\end{equation}
On the other hand, we know 
\begin{equation}
\label{equation-proof:colloary-case-III-6}
\dim(\mathcal{U}\bigcup\mathcal{V})\leq C\ .
\end{equation}
Thus, we show that 
\begin{equation}
\label{equation-proof:colloary-case-III-7}
\dim(\mathcal{U}\bigcup\mathcal{V})=C=\dim(\mathbb{R}^{C})\ \Longleftrightarrow \ (\mathcal{U}\bigcup\mathcal{V})=\mathbb{R}^{C}\ , 
\end{equation}
and the proof of Lemma~\ref{appendix-lemma-zero-space-combine} is completed. 
\end{proof}
Now we can begin to investigate the target output construction with $(\mathcal{T}^{(1)}+\mathcal{T}^{(2)}+\mathcal{T}^{(3)})$. 
Based on Eq.~\ref{equation-proof:colloary-paradigm-1},~\ref{equation-proof:colloary-paradigm-2} and~\ref{equation-proof:colloary-paradigm-3}, the $c$-th column of $\Hat{\boldsymbol{Z}}^{*}$ achieved with $(\mathcal{T}^{(1)}+\mathcal{T}^{(2)}+\mathcal{T}^{(3)})$ can be formulated as follows: 
\begin{align}
\label{equation-proof:colloary-case-III-8}
\Hat{\boldsymbol{Z}}^{*}_{:c}=diag(\boldsymbol{g})\Hat{\boldsymbol{F}}_{:c} + diag(\boldsymbol{q})\Hat{\boldsymbol{F}}\boldsymbol{W}_{:c} + diag(\boldsymbol{p}^{(c)})\Hat{\boldsymbol{F}}_{:c}\ .
\end{align}
Thus, suppose arbitrary two row numbers $m$ and $n$, where $m,n\in\{1,2,...,N\}$, $m\neq n$, the corresponding elements of $\Hat{\boldsymbol{Z}}^{*}_{:c}$, i.e., $\Hat{\boldsymbol{Z}}^{*}_{mc}$ and $\Hat{\boldsymbol{Z}}^{*}_{nc}$, are shown as below:
\begin{align}
\label{equation-proof:colloary-case-III-9-1}
&\Hat{\boldsymbol{Z}}^{*}_{mc}=(\boldsymbol{g}_{m}+\boldsymbol{p}^{(c)}_{m})\Hat{\boldsymbol{F}}_{mc} + \boldsymbol{q}_{m}\cdot\langle\Hat{\boldsymbol{F}}_{m:}\boldsymbol{W}_{:c}\rangle\ , \\
\label{equation-proof:colloary-case-III-9-2}
&\Hat{\boldsymbol{Z}}^{*}_{nc}=(\boldsymbol{g}_{n}+\boldsymbol{p}^{(c)}_{n})\Hat{\boldsymbol{F}}_{nc} + \boldsymbol{q}_{n}\cdot\langle\Hat{\boldsymbol{F}}_{n:}\boldsymbol{W}_{:c}\rangle\ .
\end{align}
On the other hand, we construct $\Hat{\boldsymbol{Z}}^{*}_{mc}$, $\Hat{\boldsymbol{Z}}^{*}_{nc}$, $\Hat{\boldsymbol{F}}_{m:}$, $\Hat{\boldsymbol{F}}_{n:}$ satisfying the following conditions:
\begin{align}
\label{equation-proof:colloary-case-III-10-1}
&\Hat{\boldsymbol{F}}_{m1}, \Hat{\boldsymbol{F}}_{n1}, \Hat{\boldsymbol{Z}}^{*}_{mc}, \Hat{\boldsymbol{Z}}^{*}_{nc}\ \text{are all \textbf{non-zero} values}\ ,\\
\label{equation-proof:colloary-case-III-10-2}
&\frac{\Hat{\boldsymbol{F}}_{m2}}{\Hat{\boldsymbol{F}}_{m1}}\neq\frac{\Hat{\boldsymbol{F}}_{n2}}{\Hat{\boldsymbol{F}}_{n1}}\ , \\
\label{equation-proof:colloary-case-III-10-3}
&\Hat{\boldsymbol{F}}_{mc}=\Hat{\boldsymbol{F}}_{nc}=0\ .
\end{align}
According to our Lemma~\ref{appendix-lemma-zero-space-combine}, conditions of Eq.~\ref{equation-proof:colloary-case-III-10-1} and Eq.~\ref{equation-proof:colloary-case-III-10-2} make the union set of the null spaces of $\Hat{\boldsymbol{F}}_{m:}$ and $\Hat{\boldsymbol{F}}_{n:}$ be the $\mathbb{R}^{C}$, leading to
\begin{equation}
\label{equation-proof:colloary-case-III-11}
\boldsymbol{W}_{:c}\in\mathbb{R}^{C}=\left\{\boldsymbol{u}|\langle\Hat{\boldsymbol{F}}_{m:},\boldsymbol{u}\rangle=0\right\}\bigcup\left\{\boldsymbol{v}|\langle\Hat{\boldsymbol{F}}_{n:},\boldsymbol{v}\rangle=0\right\}\ .  
\end{equation}
Eq.~\ref{equation-proof:colloary-case-III-11} indicates \textbf{at least one} of the following equations holds: 
\begin{align}
\label{equation-proof:colloary-case-III-12-1}
&\langle\Hat{\boldsymbol{F}}_{m:}\boldsymbol{W}_{:c}\rangle=0\ ,\\
\label{equation-proof:colloary-case-III-12-2}
&\langle\Hat{\boldsymbol{F}}_{n:}\boldsymbol{W}_{:c}\rangle=0\ . 
\end{align}
Without loss of generality, we suppose Eq.~\ref{equation-proof:colloary-case-III-12-1} holds. 
Thus, with further considering the condition as Eq.~\ref{equation-proof:colloary-case-III-10-3}, the right side of Eq.~\ref{equation-proof:colloary-case-III-9-1} is reformulated as
\begin{align}
\label{equation-proof:colloary-case-III-13}
&(\boldsymbol{g}_{m}+\boldsymbol{p}^{(c)}_{m})\Hat{\boldsymbol{F}}_{mc} + \boldsymbol{q}_{m}\cdot\langle\Hat{\boldsymbol{F}}_{m:}\boldsymbol{W}_{:c}\rangle\ ,\notag\\
=&(\boldsymbol{g}_{m}+\boldsymbol{p}^{(c)}_{m})\cdot0 + \boldsymbol{q}_{m}\cdot0\ ,\notag\\
=&0\ , 
\end{align}
which holds for all values of $\boldsymbol{g}_{m}$, $\boldsymbol{p}^{(c)}_{m}$ and $\boldsymbol{q}_{m}$. 
However, $\Hat{\boldsymbol{Z}}^{*}_{mc}$ is constructed to be a non-zero value, indicating this element can \textbf{never} be constructed for all parameter values. 
Thus, we prove that in the cases with conditions as Eq.~\ref{equation-proof:colloary-case-III-10-1},~\ref{equation-proof:colloary-case-III-10-2} and~\ref{equation-proof:colloary-case-III-10-3}, the $(\mathcal{T}^{(1)}+\mathcal{T}^{(2)}+\mathcal{T}^{(3)})$ would fail to construct the target output $\Hat{\boldsymbol{Z}}^{*}$. 
\par Hence, based on the proof of \ding{172}, \ding{173}, and \ding{174}, we show that simply adding the paradigms cannot tackle the failure cases but still remain issues. 
\end{proof}

\subsection{Equivalence between Eq.~\ref{equation-2dconv} and Eq.~\ref{equation-2stepsconv}}
\label{appendix-proofs-and-derivations-2dconv-to-2stepsconv}
In this section, we provide derivation to prove the equivalence between Eq.~\ref{equation-2dconv} and Eq.~\ref{equation-2stepsconv}. 
\begin{proof}
\label{appendix-proofs-and-derivations-2dconv-to-2stepsconv-derivation}
We consider proving the equivalence on each element of Eq.~\ref{equation-2dconv} and Eq.~\ref{equation-2stepsconv}. 
Specifically, for the $k$-th element of $\textit{Vec}(\boldsymbol{Z})_{k}$ ($k\in\mathbb{N}^{+}, 1\leq k\leq NC$), we apply decomposition to the index as follows: 
\begin{equation}
\label{equation-id-decomposition}
k=i+N(j-1)\ ,\quad i, j\in\mathbb{N}^{+},\quad 1\leq i,j\leq C\ .
\end{equation}
Such decomposition demonstrates $\textit{Vec}(\boldsymbol{Z})_{k=i+N(j-1)}=\boldsymbol{Z}_{ij}$, and also leads to derivation on Eq.~\ref{equation-2stepsconv} as follows: 
\begin{align}
\label{equation-2stepsconv-to-2dconv-1}
\boldsymbol{Z}_{ij}=&\textit{Vec}(\boldsymbol{Z})_{k=i+N(j-1)}\ ,\notag\\
=&\left[\boldsymbol{\Phi}^{(1,j)}, \boldsymbol{\Phi}^{(2,j)},...,\boldsymbol{\Phi}^{(C,j)}\right]_{i:}\cdot \textit{Vec}\left(\boldsymbol{F}\right)\ ,\notag\\
=&\left[\boldsymbol{\Phi}^{(1,j)}_{i:}, \boldsymbol{\Phi}^{(2,j)}_{i:},...,\boldsymbol{\Phi}^{(C,j)}_{i:}\right]\cdot \textit{Vec}\left(\boldsymbol{F}\right)\ ,\notag\\
=&\sum_{c=1}^{C}\boldsymbol{\Phi}^{(c,j)}_{i:}\cdot \textit{Vec}\left(\boldsymbol{F}\right)_{N(c-1)+1\sim Nc}\ ,\notag\\
(\text{similarly,}\ \textit{Vec}(\boldsymbol{F})_{k=i+N(j-1)}=\boldsymbol{F}_{ij})\ =&\sum_{n=1}^{N}\sum_{c=1}^{C}\boldsymbol{\Phi}^{(c,j)}_{in}\boldsymbol{F}_{nc}\ .  
\end{align}
Since each $\boldsymbol{\Phi}^{(c,j)}_{in}$ is unique regarding the index $\{c, j, i, n\}$, with comparison between Eq.~\ref{equation-2dconv} and Eq.~\ref{equation-2stepsconv-to-2dconv-1}, we find a \textit{surjective} mapping that maps $\boldsymbol{\Phi}^{(c,j)}_{in}$ to $\boldsymbol{\Omega}_{nc}^{(i,j)}$ and satisfies $(\boldsymbol{\Phi}^{(c,j)}_{in}=\boldsymbol{\Omega}_{nc}^{(i,j)})$. 
Furthermore, as both the number of $\boldsymbol{\Phi}^{(c,j)}_{in}$ and $\boldsymbol{\Omega}_{nc}^{(i,j)}$ are $N\times N \times C \times C=N^{2}C^{2}$, the \textit{surjective} mapping becomes \textit{bijective} mapping, indicating $N^{2}C^{2}$ unique mapping pairs $(\boldsymbol{\Phi}^{(c,j)}_{in}, \boldsymbol{\Omega}_{nc}^{(i,j)})$ that satisfy $\boldsymbol{\Phi}^{(c,j)}_{in}=\boldsymbol{\Omega}_{nc}^{(i,j)}$. 
Thus, the equivalence between Eq.~\ref{equation-2dconv} and Eq.~\ref{equation-2stepsconv} is proved. 
\end{proof}

\subsection{Proof of Proposition~\ref{proposition-generalized-to-special}}
\label{appendix-proofs-and-derivations-proposition-generalized-to-special}
\begin{proof}
\label{appendix-proofs-and-derivations-proof:proposition-generalized-to-special}
In this proof, we show how 2-D graph convolution can perform other graph convolution paradigms in the order of Paradigm {\bf (I)}, {\bf (II)} and {\bf (III)}. 
\par {\bf \ding{172}-Performing Paradigm (I):} 
\par We set $\boldsymbol{\Phi}^{(c,j)}_{\mathcal{G}}$ with the following constraints: 
\begin{align}
\label{equation-constraints-to-single-filter-no-feature-interaction-1}
&\boldsymbol{\Phi}^{(c,c)}_{\mathcal{G}}=\boldsymbol{\Phi}_{\mathcal{G}}\ ,\quad  c=1,2,...,C\  , \\
&\boldsymbol{\Phi}^{(c,j)}_{\mathcal{G}}=\boldsymbol{O}\ ,\quad  c,j=1,2,...,C\ ,\quad c\neq j\ .
\end{align}
Thus, the 2-D graph convolution with the above constraints is formulated as: 
\begin{align}
\label{equation-constraints-to-single-filter-no-feature-interaction-2}
\boldsymbol{Z}=&\textit{Vec}^{-1}\left(
\left[\begin{array}{ccc}
    \boldsymbol{\Phi}_{\mathcal{G}} & \cdots & \boldsymbol{O} \\
    \vdots & \ddots & \vdots \\
    \boldsymbol{O} & \cdots & \boldsymbol{\Phi}_{\mathcal{G}}
\end{array}\right]
\cdot \textit{Vec}\left(\boldsymbol{F}\right)\right)\notag=\boldsymbol{\Phi}_{\mathcal{G}}\boldsymbol{F},\\
\overset{\text{Column scale}}{\Longleftrightarrow}\quad \boldsymbol{Z}_{:c}=&\boldsymbol{\Phi}_{\mathcal{G}}\boldsymbol{F}_{:c}\ ,\quad c=1,2,...,C\ ,
\end{align}
which is equivalent to Eq.~\ref{equation-paradigm-single-filter-and-no-feature-interaction}, i.e., Paradigm {\bf (I)}. 
\par {\bf \ding{173}-Performing Paradigm (II):} 
\par We set $\boldsymbol{\Phi}^{(c,j)}_{\mathcal{G}}$ with the following constraints: 
\begin{align}
\label{equation-constraints-to-single-filter-and-feature-interaction-1}
\boldsymbol{\Phi}^{(c,j)}_{\mathcal{G}}=r_{j,c}\boldsymbol{\Phi}_{\mathcal{G}},\quad r_{j,c}\in\mathbb{R},\quad  c,j=1,2,...,C\ .
\end{align}
Thus, the 2-D graph convolution with the above constraints is formulated as: 
\begin{align}
\label{equation-constraints-to-single-filter-and-feature-interaction-2}
\boldsymbol{Z}=&\textit{Vec}^{-1}\left(
\left[\begin{array}{ccc}
    r_{1,1}\boldsymbol{\Phi}_{\mathcal{G}} & \cdots & r_{C,1}\boldsymbol{\Phi}_{\mathcal{G}} \\
    \vdots & \ddots & \vdots \\
    r_{1,C}\boldsymbol{\Phi}_{\mathcal{G}} & \cdots & r_{C,C}\boldsymbol{\Phi}_{\mathcal{G}}
\end{array}\right]
\cdot \textit{Vec}\left(\boldsymbol{F}\right)\right), \\
\overset{\text{Column scale}}{\Longleftrightarrow}\quad \boldsymbol{Z}_{:c}=&\sum_{j=1}^{C}r_{j,c}\boldsymbol{\Phi}_{\mathcal{G}}\boldsymbol{F}_{:j}\ ,\quad c=1,2,...,C\ .
\end{align}
With taking $r_{j,c}=\boldsymbol{R}_{jc}$, the above equation is equivalent to Eq.~\ref{equation-paradigm-single-filter-and-feature-interaction}, i.e., Paradigm {\bf (II)}. 
\par {\bf \ding{174}-Performing Paradigm (III):} 
\par We set $\boldsymbol{\Phi}^{(c,j)}_{\mathcal{G}}$ with the following constraints: 
\begin{align}
\label{equation-constraints-to-individual-filter-and-no-feature-interaction-1}
&\boldsymbol{\Phi}^{(c,c)}_{\mathcal{G}}=\boldsymbol{\Phi}^{(c)}_{\mathcal{G}}\ ,\quad  c=1,2,...,C\  , \\
&\boldsymbol{\Phi}^{(c,j)}_{\mathcal{G}}=\boldsymbol{O}\ ,\quad  c,j=1,2,...,C\ ,\quad c\neq j\ .
\end{align}
Thus, the 2-D graph convolution with the above constraints is formulated as: 
\begin{align}
\label{equation-constraints-to-individual-filter-and-no-feature-interaction-2}
\boldsymbol{Z}=&\textit{Vec}^{-1}\left(
\left[\begin{array}{ccc}
    \boldsymbol{\Phi}^{(1)}_{\mathcal{G}} & \cdots & \boldsymbol{O} \\
    \vdots & \ddots & \vdots \\
    \boldsymbol{O} & \cdots & \boldsymbol{\Phi}^{(C)}_{\mathcal{G}}
\end{array}\right]
\cdot \textit{Vec}\left(\boldsymbol{F}\right)\right), \\
\overset{\text{Column scale}}{\Longleftrightarrow}\quad \boldsymbol{Z}_{:c}=&\boldsymbol{\Phi}^{(c)}_{\mathcal{G}}\boldsymbol{F}_{:c}\ ,\quad c=1,2,...,C\ ,
\end{align}
which is equivalent to Eq.~\ref{equation-paradigm-individual-filter-and-no-feature-interaction}, i.e., Paradigm {\bf (III)}. 
\par Based on the derivations above, we prove that the 2-D graph convolution can perform other convolution paradigms with setting $\boldsymbol{\Phi}^{(c,j)}_{\mathcal{G}}$ to be special cases, indicating our 2-D graph convolution is a generalized framework of formal spectral graph convolution. 
\end{proof}

\subsection{Proof of Theorem~\ref{theorem-constructing-target-output-embeddings}}
\label{appendix-proofs-and-derivations-theorem-constructing-target-output-embeddings}
\begin{proof}
\label{appendix-proofs-and-derivations-proof:theorem-constructing-target-output-embeddings}
To begin with, note the graph convolution operator $\boldsymbol{\Phi}_{\mathcal{G}}$ can be decomposed as $\boldsymbol{\Phi}_{\mathcal{G}}=\boldsymbol{U}diag(\boldsymbol{g})\boldsymbol{U}^{T}$, we can thereby investigate an equivalent statement on frequency domain, i.e., constructing target $\Hat{\boldsymbol{Z}}^{*}=\boldsymbol{U}^{T}\boldsymbol{Z}^{*}$ with given graph signal $\Hat{\boldsymbol{F}}=\boldsymbol{U}^{T}\boldsymbol{F}$. 
Next, we prove the Theorem~\ref{theorem-constructing-target-output-embeddings} from the perspective of element construction of $\Hat{\boldsymbol{Z}}^{*}$. 
We first derive the 2-D graph convolution as Eq.~\ref{equation-2dgraphconv} to be the following formulation:
\begin{align}
\label{equation-derived-2dgraphconv}
\boldsymbol{U}^{T}\boldsymbol{Z}=&\textit{Vec}^{-1}\left(
\left[\begin{array}{ccc}
    diag(\boldsymbol{g}^{(1,1)}) & \cdots & diag(\boldsymbol{g}^{(C,1)}) \\
    \vdots & \ddots & \vdots \\
    diag(\boldsymbol{g}^{(1,C)}) & \cdots & diag(\boldsymbol{g}^{(C,C)})
\end{array}\right]
\cdot \textit{Vec}\left(\boldsymbol{U}^{T}\boldsymbol{F}\right)\right), 
\end{align}
where $\boldsymbol{g}^{(c,j)}\in\mathbb{R}^{N}$ is the parameterized graph filter. 
With the above formulation, proving the statement is thereby equivalent to proving the following relations hold:
\begin{align}
\label{equation-prove-target-1}
\Hat{\boldsymbol{Z}}^{*}_{:c}=&\sum_{j=1}^{C}diag(\boldsymbol{g}^{(c,j)})\Hat{\boldsymbol{F}}_{:j}\ ,\quad c=1,2,...,C\ ,\\
\label{equation-prove-target-2}
\overset{\text{Element scale}}{\Longleftrightarrow}\quad \Hat{\boldsymbol{Z}}^{*}_{nc}=&\sum_{j=1}^{C}\boldsymbol{g}^{(c,j)}_{n}\Hat{\boldsymbol{F}}_{nj}\ ,\quad c=1,2,...,C\ ,
\end{align}
where $\boldsymbol{g}^{(c,j)}_{n}$ denotes the $n$-th element of filter $\boldsymbol{g}^{(c,j)}$. 
\par Note that $\boldsymbol{F}$ is nontrivial on frequency, i.e., no zero row exists in $\boldsymbol{U}^{T}\boldsymbol{F}$, the $\Hat{\boldsymbol{F}}_{nj}$ ($j=1,2,...,C$) are not all $0$. 
On the other hand, $\boldsymbol{g}^{(c,j)}_{n}$ is unique towards $\Hat{\boldsymbol{F}}_{nj}$ in constructing $\Hat{\boldsymbol{Z}}^{*}_{nc}$, indicating the independence of parameter selection of $\boldsymbol{g}^{(c,j)}_{n}$. 
Therefore, with considering the parameter $\boldsymbol{g}^{(c,j)}_{n}\in\mathbb{R}$, similar to solving linear equations, we can always choose appropriate $\{\boldsymbol{g}^{(c,1)}_{n},\boldsymbol{g}^{(c,2)}_{n},...,\boldsymbol{g}^{(c,C)}_{n}\}$ to make Eq.~\ref{equation-prove-target-2} hold (and further make Eq.~\ref{equation-prove-target-1} hold), indicating that 2-D graph convolution can always construct all elements of $\boldsymbol{Z}^{*}$ with $0$ construction error. 
Furthermore, note that the number of parameter values satisfying both Eq.~\ref{equation-prove-target-1} and Eq.~\ref{equation-prove-target-2} are at least one. 
Therefore, for the parameter set of 2-D graph convolution, i.e., $\Bar{\boldsymbol{\Psi}}$, we can always find at least one appropriate parameter value $\Bar{\boldsymbol{\psi}}$, making $0$ construction error hold. 
\end{proof}

\subsection{Proof of Theorem~\ref{theorem-2dgraphconv-irreducible-number-parameters}}
\label{appendix-proofs-and-derivations-theorem-2dgraphconv-irreducible-number-parameters}
\begin{proof}
\label{appendix-proofs-and-derivations-proof:theorem-2dgraphconv-irreducible-number-parameters}
We provide constructive proof to Theorem~\ref{theorem-2dgraphconv-irreducible-number-parameters}. 
To begin with, following the proof of Theorem~\ref{theorem-constructing-target-output-embeddings} above, we consider the element-wise construction of $\Hat{\boldsymbol{Z}}^{*}$ with 2-D graph convolution as below:
\begin{align}
\label{equation-element-wise-target-construction-2dgraphconv-1}
\Hat{\boldsymbol{Z}}^{*}_{nc}=&\sum_{j=1}^{C}\boldsymbol{g}^{(c,j)}_{n}\Hat{\boldsymbol{F}}_{nj}\ ,\quad c=1,2,...,C\ ,
\end{align}
where $\boldsymbol{g}^{(c,j)}_{n}$ denotes the $n$-th element of filter $\boldsymbol{g}^{(c,j)}$. 
Next, we divide the proof into two parts based on the mentioned two types of parameter reduction on $\boldsymbol{\Psi}$: \ding{172}-setting $\boldsymbol{\Psi}\setminus\boldsymbol{\Psi}_{\text{sub}}$ to be constant; \ding{173}-sharing $\boldsymbol{\Psi}_{\text{sub}}$ to $\boldsymbol{\Psi}\setminus\boldsymbol{\Psi}_{\text{sub}}$. 
\par {\bf \ding{172}-Setting $\boldsymbol{\Psi}\setminus\boldsymbol{\Psi}_{\text{sub}}$ to be constant:} 
\par Suppose $\boldsymbol{g}^{(c,j)}_{n}\in\boldsymbol{\Psi}\setminus\boldsymbol{\Psi}_{\text{sub}}$ is set to be constant $a\in\mathbb{R}$. 
Accordingly, Eq.~\ref{equation-element-wise-target-construction-2dgraphconv-1} can be rewritten as follows:
\begin{align}
\label{equation-element-wise-target-construction-2dgraphconv-2-1}
\Hat{\boldsymbol{Z}}^{*}_{nc}=&\underbrace{a\Hat{\boldsymbol{F}}_{nj}}_{\text{constant}}+\sum_{\substack{1\leq i\leq C\\i\neq j}}\boldsymbol{g}^{(c,i)}_{n}\Hat{\boldsymbol{F}}_{ni}\ .
\end{align}
Based on the equation above, we can construct $\Hat{\boldsymbol{F}}$, $\Hat{\boldsymbol{Z}}^{*}$ satisfying the following conditions:
\begin{align}
\label{equation-element-wise-target-construction-2dgraphconv-2-2}
&\Hat{\boldsymbol{F}}_{ni}=0\ ,\quad 1\leq i\leq C\ ,\quad i\neq j \ ,\notag\\
&\Hat{\boldsymbol{F}}_{nj}\neq0 \ ,\notag\\
&\Hat{\boldsymbol{Z}}^{*}_{nc}\neq a\Hat{\boldsymbol{F}}_{nj}\ .
\end{align}
With the conditions above, Eq.~\ref{equation-element-wise-target-construction-2dgraphconv-1} \textbf{never} hold for all parameter values, indicating that failure cases exist in the 2-D graph convolution with partial constant parameters, and the proof of (I) is finished. 

\par {\bf \ding{173}-Sharing $\boldsymbol{\Psi}_{\text{sub}}$ to $\boldsymbol{\Psi}\setminus\boldsymbol{\Psi}_{\text{sub}}$:} 
\par Suppose the value of $\boldsymbol{g}^{(c,j)}_{n}\in\boldsymbol{\Psi}\setminus\boldsymbol{\Psi}_{\text{sub}}$ is shared by $\boldsymbol{g}^{(c^{'},j^{'})}_{n^{'}}\in\boldsymbol{\Psi}_{\text{sub}}$, i.e., $\boldsymbol{g}^{(c,j)}_{n}\equiv\boldsymbol{g}^{(c^{'},j^{'})}_{n^{'}}$. 
Accordingly, we have seven different cases, i.e., $(n\neq n^{'},c\neq c^{'},j\neq j^{'})$, $(n\neq n^{'},c\neq c^{'},j=j^{'})$, $(n\neq n^{'},c=c^{'},j\neq j^{'})$, $(n\neq n^{'},c=c^{'},j=j^{'})$, $(n=n^{'},c\neq c^{'},j\neq j^{'})$, $(n=n^{'},c\neq c^{'},j=j^{'})$ and $(n=n^{'},c=c^{'},j\neq j^{'})$. 
Because the proofs of those cases are similar, for convenience, we provide the proof for the most complicated case $(n\neq n^{'},c\neq c^{'},j\neq j^{'})$ here. 
Specifically, we can construct $\Hat{\boldsymbol{F}}$, $\Hat{\boldsymbol{Z}}^{*}$ satisfying the following conditions:
\begin{align}
\label{equation-element-wise-target-construction-2dgraphconv-3-1-1}
&\Hat{\boldsymbol{F}}_{ni}=0\ ,\quad 1\leq i\leq C\ ,\quad i\neq j \ ,\notag\\
&\Hat{\boldsymbol{F}}_{nj}\neq0 \ ,\notag\\
&\Hat{\boldsymbol{F}}_{n^{'}i^{'}}=0\ ,\quad 1\leq i^{'}\leq C\ ,\quad i^{'}\neq j^{'} \ ,\notag\\
&\Hat{\boldsymbol{F}}_{n^{'}j^{'}}\neq0 \ ,\notag\\
&\frac{\Hat{\boldsymbol{Z}}^{*}_{nc}}{\Hat{\boldsymbol{F}}_{nj}}\neq\frac{\Hat{\boldsymbol{Z}}^{*}_{n^{'}c^{'}}}{\Hat{\boldsymbol{F}}_{n^{'}j^{'}}}\ .
\end{align}
With the conditions above, achieving error-free construction implies the following equations hold:
\begin{align}
\label{equation-element-wise-target-construction-2dgraphconv-3-1-2}
\frac{\Hat{\boldsymbol{Z}}^{*}_{nc}}{\Hat{\boldsymbol{F}}_{nj}}=\boldsymbol{g}^{(c,j)}_{n}\neq\boldsymbol{g}^{(c^{'},j^{'})}_{n^{'}}=\frac{\Hat{\boldsymbol{Z}}^{*}_{n^{'}c^{'}}}{\Hat{\boldsymbol{F}}_{n^{'}j^{'}}}\ . 
\end{align}
which contradicts to the condition of sharing parameter. 
Thus, in these special cases of $\Hat{\boldsymbol{F}}$, $\Hat{\boldsymbol{Z}}^{*}$, the construction error \textbf{never} achieves $0$ for all parameter values. 
As the rest cases can be proved by constructing conditions similar to those in Eq.~\ref{equation-element-wise-target-construction-2dgraphconv-3-1-1}, we complete the proof of (II). 

\par Thus, based on \ding{172} and \ding{173}, we prove that reducing parameters in 2-D graph convolution results in indeed failure cases on target output construction, demonstrating that the 2-D graph convolution is irreducible in the number of parameters. 
\end{proof}

\subsection{Derivation of Obtaining Eq.~\ref{equation-chebyshev-polynomial-2dgraphconv} with Eq.~\ref{equation-2dgraphconv} and Eq.~\ref{equation-chebyshev-interpolation-approximation}}
\label{appendix-proofs-and-derivations-obtaining-chebyshev-polynomial-2dgraphconv}
Note $\sum_{b=0}^{D}\theta_{b}^{(c,j)}T_{d}(x_{b})$ is irrelevant to $b$ but $d$, for simplicity, we use $\Hat{\theta_{d}}^{(c,j)}$ to denote it for each pair of $(c,j)$ and dismiss the constant $\frac{2}{D+1}$. 
Thus, the Eq.~\ref{equation-2dgraphconv} with Eq.~\ref{equation-chebyshev-interpolation-approximation} can be reformulated as follows:
\begin{align*}
\boldsymbol{Z}\doteq&\textit{Vec}^{-1}\left(
\left[\begin{array}{ccc}
    \sum_{d=0}^{D}\Hat{\theta_{d}}^{(1,1)}T_{d}(\Hat{\boldsymbol{L}}) & \cdots & \sum_{d=0}^{D}\Hat{\theta_{d}}^{(C,1)}T_{d}(\Hat{\boldsymbol{L}}) \\
    \vdots & \ddots & \vdots \\
    \sum_{d=0}^{D}\Hat{\theta_{d}}^{(1,C)}T_{d}(\Hat{\boldsymbol{L}}) & \cdots & \sum_{d=0}^{D}\Hat{\theta_{d}}^{(C,C)}T_{d}(\Hat{\boldsymbol{L}})
\end{array}\right]
\cdot \textit{Vec}\left(\boldsymbol{F}\right)\right),\notag\\
=&\left[\sum_{c=1}^{C}\sum_{d=0}^{D}\Hat{\theta_{d}}^{(c,1)}T_{d}(\Hat{\boldsymbol{L}})\boldsymbol{F}_{:c},\sum_{c=1}^{C}\sum_{d=0}^{D}\Hat{\theta_{d}}^{(c,2)}T_{d}(\Hat{\boldsymbol{L}})\boldsymbol{F}_{:c},...,\sum_{c=1}^{C}\sum_{d=0}^{D}\Hat{\theta_{d}}^{(c,C)}T_{d}(\Hat{\boldsymbol{L}})\boldsymbol{F}_{:c}\right],\notag\\
=&\left[\sum_{d=0}^{D}T_{d}(\Hat{\boldsymbol{L}})\left(\sum_{c=1}^{C}\Hat{\theta_{d}}^{(c,1)}\boldsymbol{F}_{:c}\right),\sum_{d=0}^{D}T_{d}(\Hat{\boldsymbol{L}})\left(\sum_{c=1}^{C}\Hat{\theta_{d}}^{(c,2)}\boldsymbol{F}_{:c}\right),...,\sum_{d=0}^{D}T_{d}(\Hat{\boldsymbol{L}})\left(\sum_{c=1}^{C}\Hat{\theta_{d}}^{(c,C)}\boldsymbol{F}_{:c}\right)\right],\notag\\
=&\left[\sum_{d=0}^{D}T_{d}(\Hat{\boldsymbol{L}})\boldsymbol{F}\left[\begin{array}{c}
    \Hat{\theta_{d}}^{(1,1)} \\
    \Hat{\theta_{d}}^{(2,1)} \\
    \vdots \\
    \Hat{\theta_{d}}^{(C,1)}
\end{array}\right],\sum_{d=0}^{D}T_{d}(\Hat{\boldsymbol{L}})\boldsymbol{F}\left[\begin{array}{c}
    \Hat{\theta_{d}}^{(1,2)} \\
    \Hat{\theta_{d}}^{(2,2)} \\
    \vdots \\
    \Hat{\theta_{d}}^{(C,2)}
\end{array}\right],...,\sum_{d=0}^{D}T_{d}(\Hat{\boldsymbol{L}})\boldsymbol{F}\left[\begin{array}{c}
    \Hat{\theta_{d}}^{(1,C)} \\
    \Hat{\theta_{d}}^{(2,C)} \\
    \vdots \\
    \Hat{\theta_{d}}^{(C,C)}
\end{array}\right]\right],\notag\\
=&\left[T_{0}(\Hat{\boldsymbol{L}})\boldsymbol{F}\left[\begin{array}{c}
    \Hat{\theta_{0}}^{(1,1)} \\
    \Hat{\theta_{0}}^{(2,1)} \\
    \vdots \\
    \Hat{\theta_{0}}^{(C,1)}
\end{array}\right],T_{0}(\Hat{\boldsymbol{L}})\boldsymbol{F}\left[\begin{array}{c}
    \Hat{\theta_{0}}^{(1,2)} \\
    \Hat{\theta_{0}}^{(2,2)} \\
    \vdots \\
    \Hat{\theta_{0}}^{(C,2)}
\end{array}\right],...,T_{0}(\Hat{\boldsymbol{L}})\boldsymbol{F}\left[\begin{array}{c}
    \Hat{\theta_{0}}^{(1,C)} \\
    \Hat{\theta_{0}}^{(2,C)} \\
    \vdots \\
    \Hat{\theta_{0}}^{(C,C)}
\end{array}\right]\right]\notag\\
&\quad+\left[T_{1}(\Hat{\boldsymbol{L}})\boldsymbol{F}\left[\begin{array}{c}
    \Hat{\theta_{1}}^{(1,1)} \\
    \Hat{\theta_{1}}^{(2,1)} \\
    \vdots \\
    \Hat{\theta_{1}}^{(C,1)}
\end{array}\right],T_{1}(\Hat{\boldsymbol{L}})\boldsymbol{F}\left[\begin{array}{c}
    \Hat{\theta_{1}}^{(1,2)} \\
    \Hat{\theta_{1}}^{(2,2)} \\
    \vdots \\
    \Hat{\theta_{1}}^{(C,2)}
\end{array}\right],...,T_{1}(\Hat{\boldsymbol{L}})\boldsymbol{F}\left[\begin{array}{c}
    \Hat{\theta_{1}}^{(1,C)} \\
    \Hat{\theta_{1}}^{(2,C)} \\
    \vdots \\
    \Hat{\theta_{1}}^{(C,C)}
\end{array}\right]\right]\notag\\
&\quad\quad\quad\quad\quad\quad\quad\vdots\notag\\
&\quad+\left[T_{D}(\Hat{\boldsymbol{L}})\boldsymbol{F}\left[\begin{array}{c}
    \Hat{\theta_{D}}^{(1,1)} \\
    \Hat{\theta_{D}}^{(2,1)} \\
    \vdots \\
    \Hat{\theta_{D}}^{(C,1)}
\end{array}\right],T_{D}(\Hat{\boldsymbol{L}})\boldsymbol{F}\left[\begin{array}{c}
    \Hat{\theta_{D}}^{(1,2)} \\
    \Hat{\theta_{D}}^{(2,2)} \\
    \vdots \\
    \Hat{\theta_{D}}^{(C,2)}
\end{array}\right],...,T_{D}(\Hat{\boldsymbol{L}})\boldsymbol{F}\left[\begin{array}{c}
    \Hat{\theta_{D}}^{(1,C)} \\
    \Hat{\theta_{D}}^{(2,C)} \\
    \vdots \\
    \Hat{\theta_{D}}^{(C,C)}
\end{array}\right]\right],\notag\\
=&\sum_{d=0}^{D}\left[T_{d}(\Hat{\boldsymbol{L}})\boldsymbol{F}\left[\begin{array}{c}
    \Hat{\theta_{d}}^{(1,1)} \\
    \Hat{\theta_{d}}^{(2,1)} \\
    \vdots \\
    \Hat{\theta_{d}}^{(C,1)}
\end{array}\right],T_{d}(\Hat{\boldsymbol{L}})\boldsymbol{F}\left[\begin{array}{c}
    \Hat{\theta_{d}}^{(1,2)} \\
    \Hat{\theta_{d}}^{(2,2)} \\
    \vdots \\
    \Hat{\theta_{d}}^{(C,2)}
\end{array}\right],...,T_{d}(\Hat{\boldsymbol{L}})\boldsymbol{F}\left[\begin{array}{c}
    \Hat{\theta_{d}}^{(1,C)} \\
    \Hat{\theta_{d}}^{(2,C)} \\
    \vdots \\
    \Hat{\theta_{d}}^{(C,C)}
\end{array}\right]\right],\notag\\
=&\sum_{d=0}^{D}T_{d}(\Hat{\boldsymbol{L}})\boldsymbol{F}\cdot\left[\begin{array}{cccc}
    \Hat{\theta_{d}}^{(1,1)} & \Hat{\theta_{d}}^{(1,2)} & \cdots & \Hat{\theta_{d}}^{(1,C)} \\
    \Hat{\theta_{d}}^{(2,1)} & \Hat{\theta_{d}}^{(2,2)} & \cdots & \Hat{\theta_{d}}^{(2,C)} \\
    \vdots & \vdots & \vdots & \vdots \\
    \Hat{\theta_{d}}^{(C,1)} & \Hat{\theta_{d}}^{(C,2)} & \cdots & \Hat{\theta_{d}}^{(C,C)}
\end{array}\right],\notag\\ 
=&\sum_{d=0}^{D}T_{d}(\Hat{\boldsymbol{L}})\boldsymbol{F}\cdot\left[\begin{array}{cccc}
    \sum_{b=0}^{D}\theta_{b}^{(1,1)}T_{d}(x_{b}) & \sum_{b=0}^{D}\theta_{b}^{(1,2)}T_{d}(x_{b}) & \cdots & \sum_{b=0}^{D}\theta_{b}^{(1,C)}T_{d}(x_{b}) \\
    \sum_{b=0}^{D}\theta_{b}^{(2,1)}T_{d}(x_{b}) & \sum_{b=0}^{D}\theta_{b}^{(2,2)}T_{d}(x_{b}) & \cdots & \sum_{b=0}^{D}\theta_{b}^{(2,C)}T_{d}(x_{b}) \\
    \vdots & \vdots & \vdots & \vdots \\
    \sum_{b=0}^{D}\theta_{b}^{(C,1)}T_{d}(x_{b}) & \sum_{b=0}^{D}\theta_{b}^{(C,2)}T_{d}(x_{b}) & \cdots & \sum_{b=0}^{D}\theta_{b}^{(C,C)}T_{d}(x_{b})
\end{array}\right],\notag\\
=&\sum_{d=0}^{D}T_{d}(\Hat{\boldsymbol{L}})\boldsymbol{F}\cdot\left(\sum_{b=0}^{D}T_{d}(x_{b})\cdot\underbrace{\left[\begin{array}{cccc}
    \theta_{b}^{(1,1)} & \theta_{b}^{(1,2)} & \cdots & \theta_{b}^{(1,C)} \\
    \theta_{b}^{(2,1)} & \theta_{b}^{(2,2)} & \cdots & \theta_{b}^{(2,C)} \\
    \vdots & \vdots & \vdots & \vdots \\
    \theta_{b}^{(C,1)} & \theta_{b}^{(C,2)} & \cdots & \theta_{b}^{(C,C)}
\end{array}\right]}_{\text{denote as}\ \boldsymbol{\Theta}_{::b}}\right),\notag\\
=&\sum_{d=0}^{D}T_{d}(\Hat{\boldsymbol{L}})\boldsymbol{F}\left(\sum_{b=0}^{D}T_{d}(x_{b})\boldsymbol{\Theta}_{::b}\right), 
\end{align*}
which is the Eq.~\ref{equation-chebyshev-polynomial-2dgraphconv}.


\section{Experimental Setup}
\label{appendix-experimental-setup}
In this section, we report full experimental details used in Section~\ref{section-empirical-studies} for reproducibility. 

\subsection{Dataset Statistics}
\label{appendix-experimental-setup-dataset-statistics}

We report statistical details of datasets used in Section~\ref{section-empirical-studies}. 
Specifically, details of $10$ medium-sized datasets in Section~\ref{section-empirical-studies-node-classification} are reported in Table~\ref{table-datasets-statistics-medium}. 
On the other hand, details of $8$ challenging datasets (including large-scale and latest heterophilic) are reported in Table~\ref{table-datasets-statistics-challenging}. 
All datasets are achieved with Pytorch Geometric framework~\cite{PyTorchGeometric}. 

\begin{table}[!ht]
  \caption{Statistics of $10$ medium-sized datasets.} 
  \vskip 0.15in
  \label{table-datasets-statistics-medium}
  \centering
  \setlength{\tabcolsep}{7pt}
  \renewcommand\arraystretch{1.0}
  \resizebox{\textwidth}{!}{
  \begin{tabular}{lccccc|ccccc}
  \hline
    \multirow{2}{*}{}
        & \multicolumn{5}{c|}{Homophilic} & \multicolumn{5}{c}{Heterophilic} \\ \cline{2-11}
        & Cora & CiteSeer & PubMed & Computers & Photo & Chameleon & Squirrel & Texas & Cornell & Actor \\ \hline
    \# Nodes & $2708$ & $3327$ & $19,717$ & $13,752$ & $7650$ & $2277$ & $5201$ & $183$ & $183$ & $7600$ \\
    \# Edges &  $5278$ & $4552$ & $44,324$ & $245,861$ & $119,081$ & $31,371$ & $198,353$ & $279$ & $277$ & $26,659$ \\
    \# Features & $1433$ & $3703$ & $500$ & $767$ & $745$ & $2325$ & $2089$ & $1703$ & $1703$ & $932$ \\
    \# Classes & $7$ & $6$ & $5$ & $10$ & $8$ & $5$ & $5$ & $5$ & $5$ & $5$ \\
    \hline
  \end{tabular}}
\end{table}
\begin{table}[!ht]
  \caption{Statistics of $8$ challenging datasets.} 
  \vskip 0.15in
  \label{table-datasets-statistics-challenging}
  \centering
  \setlength{\tabcolsep}{6pt}
  \renewcommand\arraystretch{1.0}
  \resizebox{\textwidth}{!}{
  \begin{tabular}{lcc|ccc|ccc}
    \hline
    \multirow{2}{*}{}
        & \multicolumn{2}{c|}{Large homophilic} & \multicolumn{3}{c|}{Large heterophilic} & \multicolumn{3}{c}{Latest heterophilic} \\ \cline{2-9}
        & Ogbn-arxiv & Ogbn-products &  Penn94  &  Genius  &  Gamers  &  Roman-empire &  Tolokers  &  Amazon-ratings \\ \hline
    \# Nodes & $169,343$ & $2,449,029$ & $41,554$ & $421,961$ & $168,114$ & $22,662$ & $11,758$ & $24,492$  \\
    \# Edges &  $1,166,243$ & $61,859,140$ & $1,362,229$ & $984,979$ & $6,797,557$ & $32,927$ & $519,000$ & $93,050$  \\
    \# Features & $128$ & $100$ & $4814$ & $12$ & $7$ & $300$ & $10$ & $300$  \\
    \# Classes & $40$ & $47$ & $2$ & $2$ & $2$ & $18$ & $2$ & $5$  \\
    \hline
  \end{tabular}}
\end{table}

\subsection{Baseline Implementations}
\label{appendix-experimental-setup-baseline-implementations}
For the implementation of GCN and APPNP, we follow common implementation applied in previous works~\cite{GPRGNN,BernNet-GNN-narrowbandresults-1,ChebNetII,JacobiConv,OptBasisGNN}. 
For the remaining baselines, we resort to the officially released code, accessible via the provided URLs as follows. 
Notably, for GNN-HF/LF, we only report the higher results of two variants on each dataset; for the work~\cite{glognn++}, we only take the most effective model, i.e., GloGNN++, as a baseline. 

\begin{itemize}[leftmargin=*,parsep=2pt,itemsep=2pt,topsep=2pt]
\item GPRGNN: \url{https://github.com/jianhao2016/GPRGNN}
\item GNN-HF/LF: \url{https://github.com/zhumeiqiBUPT/GNN-LF-HF}
\item BernNet: \url{https://github.com/ivam-he/BernNet}
\item ChebNetII: \url{https://github.com/ivam-he/ChebNetII}
\item OptBasis: \url{https://github.com/yuziGuo/FarOptBasis}
\item DSGC: \url{https://github.com/liqimai/DSGC}
\item Spec-GN: \url{https://github.com/qslim/gnn-spectrum}
\item ADC: \url{https://github.com/abcbdf/ADC}
\item JacobiConv: \url{https://github.com/GraphPKU/JacobiConv}
\item GCNII: \url{https://github.com/chennnM/GCNII}
\item PDE-GCN: \url{https://openreview.net/forum?id=wWtk6GxJB2x}
\item Nodeformer: \url{https://github.com/qitianwu/NodeFormer}
\item GloGNN++: \url{https://github.com/RecklessRonan/GloGNN}
\end{itemize}

\subsection{Experimental Settings in Section~\ref{section-empirical-studies-node-classification}}
\label{appendix-experimental-setup-experimental-settings-node-classification}
\par {\bf Hyperparameter settings.} We set wide search ranges for general hyperparameters in all models. 
Specifically, we optimize weight decay on $\left\{0,1e-8,1e-7,...,1e-3,1e-2\right\}$, learning rate on $\left\{1e-4,5e-4,1e-3,...,0.1,...,0.5\right\}$, dropout rate on $\left\{0,0.1,0.2,...,0.9\right\}$ for all models. 
For specific hyperparameters in baselines, we follow the same search ranges used in the original papers. 
For specific hyperparameters in ChebNet2D, we set the layer of MLP $h_{\eta}(\cdot)$ to $2$ with $64$ hidden units, and optimize truncated polynomial order $D$ on $\left\{2,4,8,10,12,16\right\}$. 
To keep fair and comprehensive experiments, we apply grid-search for all models to achieve optimal selections on hyperparameters. 
\par {\bf Training details.} We use \textit{Cross Entropy} as the loss function and Adam optimizer~\cite{Adamoptimizer} to train the models, where an early stopping $200$ and a maximum of $2000$ epochs are set for all datasets.

\subsection{Experimental Settings in Section~\ref{section-empirical-studies-node-classification-large-challenging}}
\label{appendix-experimental-setup-experimental-settings-node-classification-large-challenging}
\par {\bf Hyperparameter settings.} Both general hyperparameters and specific hyperparameters in baselines are same as the pipeline in Section~\ref{appendix-experimental-setup-experimental-settings-node-classification}. 
For specific hyperparameters in ChebNet2D, we optimize the layer number of MLP on $\left\{2,3\right\}$, hidden size of MLP on $\left\{128,256,512,1024,2048\right\}$, truncated polynomial order $D$ on $\left\{2,4,8,10,12,16\right\}$. 
All parameters are optimized with grid-search. 
\par {\bf Training details.} We apply batch training for the experiments on large-scale datasets, where the batch training size (number of nodes) will be optimized on $\left\{10000,50000\right\}$. 
Especially, for the training of ChebNet2D on large-scale datasets, we follow the scaled-up strategy used in~\cite{ChebNetII,OptBasisGNN} and precompute polynomial bases to speed up the training. 
For the training on the rest datasets, the settings of ChebNet2D are the same as Section~\ref{appendix-experimental-setup-experimental-settings-node-classification}. 
The same Cross Entropy loss, Adam optimizer, and training epoch settings are used as Section~\ref{appendix-experimental-setup-experimental-settings-node-classification}. 

\subsection{Experimental Settings in Section~\ref{section-empirical-studies-ablation-study-comparison-on-convolution-paradigms}}
\label{appendix-experimental-setup-experimental-settings-ablation-study-comparison-on-convolution-paradigms}
\par {\bf Hyperparameter settings.} Both general hyperparameters and specific hyperparameters in baselines are the same as the pipeline in Section~\ref{appendix-experimental-setup-experimental-settings-node-classification}. 
Please note that all baselines are re-implemented into same decoupling model architecture with MLP as feature transformation. 
Thus, to keep fairness, we follow the same search range as ChebNet2D in Section~\ref{appendix-experimental-setup-experimental-settings-node-classification} and~\ref{appendix-experimental-setup-experimental-settings-node-classification-large-challenging} for the MLP in baselines, and involve similar batch training for experiments on large datasets, where all parameters are optimized by grid-search. 
\par {\bf Training details.} To keep fair comparison, for experiments on large-scale datasets, we adopt the same scaled-up strategy as~\cite{ChebNetII,OptBasisGNN} to all baselines, and precompute polynomial bases to speed up the training. 
For experiments on the rest datasets, the same settings as ChebNet2D in Section~\ref{appendix-experimental-setup-experimental-settings-node-classification} are used for all models. 
The same Cross Entropy loss, Adam optimizer, and training epoch settings are used as Section~\ref{appendix-experimental-setup-experimental-settings-node-classification}.

\end{document}